 \documentclass[letterpaper]{article} 

\usepackage{aaai23-r2hcai}  
\usepackage{times}
\usepackage{helvet}
\usepackage{courier}
\usepackage[hyphens]{url}
\usepackage{graphicx}
\usepackage{tikz}
\usepackage{amsmath, amssymb, amsthm}
\usetikzlibrary{arrows.meta,positioning,calc}
\usepackage{pgfplots}
\usetikzlibrary{intersections}
\usepgfplotslibrary{fillbetween} 
\pgfplotsset{compat=1.18}

\newtheorem{theorem}{Theorem}
\newtheorem{definition}{Definition}
\newtheorem{lemma}{Lemma} 
\newtheorem{corollary}{Corollary} 
\newtheorem{assumption}{Assumption} 
\newcommand{\TV}{\mathrm{TV}} 
\theoremstyle{remark} 
\newtheorem{remark}{Remark}

\usepackage[numbers]{natbib} 
\usepackage{caption}
\usepackage{algorithm}
\usepackage{algorithmic}

\usepackage{newfloat}
\usepackage{listings}
\DeclareCaptionStyle{ruled}{labelfont=normalfont,labelsep=colon,strut=off} 
\floatstyle{ruled}
\newfloat{listing}{tb}{lst}{}
\floatname{listing}{Listing}
\title{Intrinsic Stability Limits of Autoregressive Reasoning: Structural Consequences for Long-Horizon Execution}
\author{
    Hsien-Jyh Liao$^1$\equalcontrib \quad
}
\affiliations {
    hjliao123@gmail.com
   
}

\makeatletter
\renewcommand{\subsubsection}{\@startsection{subsubsection}{3}{\z@}%
    {-1.25ex \@plus -1ex \@minus -.2ex}
    {0.5ex \@plus .2ex}
    {\normalfont\normalsize\bfseries}} 

\makeatother

\begin{document}
\maketitle

\begin{abstract}
Large language models (LLMs) demonstrate remarkable reasoning capabilities, yet their performance often deteriorates sharply in long-horizon tasks, exhibiting systematic breakdown beyond certain scales. Conventional explanations primarily attribute this phenomenon to task complexity, such as combinatorial search explosion or long-term credit assignment challenges. In this work, we argue that these explanations are incomplete: even in linear, unbranched tasks without semantic ambiguity, autoregressive execution is subject to an intrinsic stability limit.

We propose that the fundamental constraint on long-horizon reasoning arises from process-level instability in autoregressive generation rather than solely from search or task complexity, reframing long-horizon reasoning as a problem of structural governance. We derive Theorem~A, showing that decision advantage in single-path autoregressive reasoning decays exponentially with execution length, imposing a fundamental bound on maintainable reasoning chains. This result implies a structural consequence: stable long-horizon reasoning requires discrete segmentation, naturally inducing graph-like execution structures such as directed acyclic graphs (DAGs).

Empirical studies in both synthetic environments and real TextWorld tasks reveal observable performance cliffs consistent with theoretical predictions. Our findings provide a dynamical perspective on long-horizon reasoning failure and suggest new limitations on maintaining long-term coherence under purely autoregressive architectures. Furthermore, we highlight that short-horizon evaluation protocols may obscure structural instability, indicating a potential shift from scaling toward structured governance in future reasoning systems.
\end{abstract}

\smallskip
\noindent \textbf{Keywords:}
Autoregressive Reasoning
Long-Horizon Stability
Chain-of-Thought Reasoning
Information-Theoretic Analysis
Structured Reasoning
Inference Dynamics

\section{Introduction}

Large language models (LLMs) have demonstrated remarkable capabilities across a wide range of reasoning tasks. For example, advanced models achieve strong performance on mathematical reasoning benchmarks when guided by structured prompting strategies such as chain-of-thought prompting~\cite{wei2022chain}, tree-of-thought reasoning~\cite{yao2023tree}, graph-based reasoning frameworks~\cite{besta2024graph}, and agent-style paradigms including ReAct~\cite{yao2023react} and Reflexion~\cite{shinn2024reflexion}. These approaches are often interpreted as instances of deliberate or ``System~2'' reasoning~\cite{kahneman2011thinking}, where intermediate reasoning steps are explicitly generated to enhance deliberation and planning.

Despite these advances, an important limitation remains insufficiently formalized. As reasoning sequences grow longer, the stability of reasoning trajectories frequently degrades, even when the underlying task structure does not significantly increase in complexity. Model outputs may exhibit logical drift, internal inconsistency, or contradictions with earlier intermediate conclusions. This observation suggests that the central challenge of generative reasoning is not merely whether reasoning can be completed, but whether it can be stably sustained over extended horizons.

Existing explanations typically attribute long-horizon reasoning failures to task-level complexity, such as combinatorial search explosion, long-term credit assignment difficulties, or accumulating semantic ambiguity. These perspectives align with classical analyses in planning and reinforcement learning, where long-horizon challenges are framed as consequences of large state or policy spaces~\cite{kaelbling1998planning,sutton1999between}. Within language model research, many recent approaches attempt to mitigate such issues through structured reasoning strategies or external scaffolding mechanisms~\cite{wei2022chain,yao2023tree,besta2024graph}. However, these explanations implicitly assume that once task structure is properly managed, the reasoning process itself can be extended arbitrarily.

For autoregressive generative systems, this assumption has not been rigorously examined. If reasoning is viewed as a dynamical process that evolves over time while accumulating internal uncertainty, an alternative hypothesis emerges: long-horizon reasoning failure may arise from intrinsic instability in the execution process itself rather than solely from task complexity. Prior work on neural text generation has identified degeneration phenomena related to sequence continuation~\cite{holtzman2019degeneration}, yet a formal account of stability limits in extended reasoning trajectories remains largely unexplored.

While scaling laws capture aggregate performance trends~\cite{kaplan2020scaling}, they largely remain silent on the structural stability properties of extended reasoning execution. We therefore propose a complementary perspective: long-horizon reasoning should be analyzed as a stability problem of sequential execution dynamics rather than purely as a scaling or search problem.

We formalize this perspective through an information-theoretic and dynamical analysis of autoregressive reasoning. The central theoretical result (Theorem~A) shows that, under mild assumptions, the decision advantage of autoregressive reasoning trajectories decays exponentially with execution length. This demonstrates that instability can emerge as an inherent property of autoregressive execution dynamics rather than merely an artifact of specific architectures. Consequently, stable long-horizon reasoning requires structural segmentation and intermediate stabilization mechanisms, motivating transitions from purely linear execution toward structured graph-like organization.

\section{Related Work}
\label{sec:related_work}

This chapter reviews prior research related to long-horizon reasoning, structured reasoning, and dialogue-oriented systems, positioning our work within the broader research landscape. We organize related studies into four categories: (1) long-horizon decision making and exponential difficulty, (2) chain-based reasoning methods in large language models, (3) structured and graph-based reasoning frameworks, and (4) computational argumentation and dialogue systems. 

Our perspective does not seek to replace existing approaches, but instead addresses a level that has received comparatively limited formal treatment: the dynamical stability of the generative reasoning process itself.

\subsection{Long-Horizon Decision Making and Exponential Difficulty}

In reinforcement learning (RL) and partially observable Markov decision processes (POMDPs), the difficulty of long-horizon tasks has been extensively studied. Classical results demonstrate that, in the absence of informative intermediate signals or identifiable states, the sample complexity required for learning or planning can grow exponentially with decision depth \cite{kaelbling1998planning}. Such exponential growth is typically attributed to combinatorial explosion in state or policy spaces, where the number of feasible trajectories increases rapidly over time.

To mitigate these challenges, the RL literature has introduced various approaches, including temporal abstraction and the options framework \cite{sutton1999between}, which reduce effective decision depth through macro-actions or hierarchical decomposition. However, these methods primarily interpret long-horizon difficulty through the lens of search complexity or policy representation. They generally do not analyze how uncertainty accumulates dynamically when a generative model repeatedly performs autoregressive updates along a single execution trajectory.

Theorem~A proposed in this work exhibits mathematically similar exponential behavior, but differs fundamentally in interpretation. Rather than attributing failure to the proliferation of possible paths, we show that collapse can arise even in strictly linear settings without branching, due to cumulative noise within the autoregressive generative process itself.

\subsection{Chain-Based Reasoning in Large Language Models}

Recent advances in large language models (LLMs) have led to a surge of techniques designed to explicitly guide reasoning processes. Chain-of-Thought (CoT) prompting encourages models to generate intermediate reasoning steps, significantly improving performance on arithmetic and logical tasks \cite{wei2022chain}. Subsequent approaches such as Tree-of-Thought (ToT) introduce branching exploration and backtracking mechanisms to evaluate multiple reasoning candidates \cite{yao2023tree}, while related frameworks combine reasoning and acting in interactive environments \cite{yao2023react}.

A common characteristic of these methods is the introduction of structure at the semantic level, aiming to improve output quality through explicit intermediate representations. Nevertheless, most existing work implicitly treats reasoning as an autoregressive sequence that can be extended indefinitely. The physical or informational constraints governing the stability of long reasoning trajectories remain largely unformalized. Consequently, even semantically coherent intermediate steps may lead to logical drift or inconsistency as reasoning depth increases.

Our perspective does not challenge the empirical effectiveness of CoT or ToT. Instead, we argue that their success implicitly relies on segmentation and reset mechanisms within the reasoning process. In our framework, this phenomenon emerges as a structural constraint on edge length within a reasoning graph, supported by an information-theoretic analysis.

\subsection{Structured Reasoning and Graph-Based Representations}

Another line of research represents reasoning as graph structures, such as directed graphs or directed acyclic graphs (DAGs), to explicitly model dependencies between states, subproblems, or intermediate conclusions. These approaches have been applied in planning, program synthesis, and knowledge reasoning, where complex tasks are decomposed into manageable components. Recent work such as Graph-of-Thoughts extends this paradigm by organizing reasoning as structured graph traversal \cite{besta2024graph}.

However, many structured reasoning methods assume that nodes correspond to stable, randomly accessible representations. In generative models, by contrast, such nodes are themselves produced through sequential autoregressive processes and are therefore constrained by limited context capacity and accumulated uncertainty. Existing graph-based reasoning frameworks primarily model structure at the semantic or symbolic level, without addressing the dynamical limitations inherent in the generation process.

In our formulation, the DAG is not merely a representational abstraction but a topological consequence of stability constraints. Edges correspond to bounded segments of autoregressive execution, while nodes correspond to necessary operations of state consolidation and informational compression.

\subsection{Computational Argumentation and Dialogue Systems}

Recent research has also explored the integration of computational argumentation into dialogue systems, formalizing relationships between claims, rebuttals, and supporting evidence. Argumentation-based conversational agents offer potential advantages in interpretability and non-monotonic reasoning, and can be combined with Transformer architectures \cite{vaswani2017attention} to enhance dialogue quality.

However, this line of work primarily focuses on structuring reasoning content rather than analyzing the stability of long generative trajectories. Even if argument structures are logically valid at an abstract level, their generation may still suffer from instability when executed as extended autoregressive sequences. Thus, computational argumentation and our framework operate at complementary levels: the former addresses semantic representation and evaluation, whereas the latter addresses whether reasoning execution can be sustained dynamically under information-theoretic constraints.

\subsection{Summary}

In summary, prior work has proposed numerous methods for improving long-horizon reasoning through search strategies, semantic guidance, structured representations, or argumentation theory. Nevertheless, the intrinsic stability limits of autoregressive generation remain underexplored. Through Theorem~A and the subsequent DAG-based reasoning framework, this work aims to fill this gap by providing a process-level analysis grounded in information theory and dynamical systems perspectives, offering a structural interpretation of ``System 2'' reasoning behaviors \cite{kahneman2011thinking}.

\section{Process-Level Stability Limits of Long-Horizon Autoregressive Reasoning (Theorem~A)}
\label{sec:theoremA}

This chapter addresses a question that has remained largely implicit in prior work: \emph{without introducing any structural interruption, can an autoregressive reasoning trajectory be stably extended in an information-theoretic sense?}

Most existing approaches to long-horizon reasoning focus on mitigating task-level difficulties---search complexity, semantic structure, or proof formats---and implicitly assume that once the reasoning content is properly guided, the execution process can be prolonged as needed. For autoregressive language models, however, reasoning does not unfold over a static, randomly accessible state. Instead, it is realized through a sequential execution process whose implicit state evolves over time while accumulating uncertainty. Even in linear tasks without branching search, backtracking, or semantic ambiguity, it remains unclear whether such an execution process can be extended indefinitely.

In what follows, we show that long-horizon autoregressive reasoning is subject to intrinsic stability limits even under highly simplified conditions: no search branching, a known policy, and minimal task structure. We formalize this phenomenon as Theorem~A, and in Section~\ref{sec:structural_consequence} discuss its direct structural consequences for reasoning topology.

\begin{figure*}[t]
\centering
\begin{tikzpicture}[
    node/.style={draw, rounded corners=2pt, minimum width=10mm, minimum height=6mm, align=center},
    small/.style={font=\small},
    arr/.style={-Latex, line width=0.6pt},
    sep/.style={line width=0.8pt, dashed},
]
\node[small] at (0,2.3) {\textbf{(a) Single long edge: noise accumulation erodes directionality}};
\node[node] (z0) at (-3,1.2) {$Z_0$};
\node[node] (z1) at (-1.5,1.2) {$Z_1$};
\node[node] (z2) at (0,1.2) {$Z_2$};
\node[node] (z3) at (1.5,1.2) {$Z_3$};
\node[node] (zL) at (3.2,1.2) {$Z_L$};

\draw[arr] (z0) -- (z1);
\draw[arr] (z1) -- (z2);
\draw[arr] (z2) -- (z3);
\draw[arr] (z3) -- (zL);

\node[small] at (0,0.55) {$\rho(L)\ \le\ \rho_0 e^{-\gamma L}$};

\draw[line width=0.6pt] (-3,0.1) -- (3.2,0.1);
\draw[line width=2.0pt] (-3,0.1) -- (-1.2,0.1); 
\draw[line width=1.2pt] (-1.2,0.1) -- (0.8,0.1);
\draw[line width=0.6pt] (0.8,0.1) -- (3.2,0.1);

\node[small, anchor=west] at (-3.0,-0.25) {Directionality (decision advantage) decays with length};
\node[small, anchor=west] at (1.1,-0.55) {$L > L^*$ implies $\rho < \tau$};

\begin{scope}[xshift=9.2cm]
\node[small] at (0,2.3) {\textbf{(b) Segmented execution: interruptions suppress accumulation}};
\node[node] (a0) at (-3,1.2) {$Z_0$};
\node[node] (a1) at (-1.5,1.2) {$Z_1$};
\node[node] (a2) at (0,1.2) {$Z_2$};

\node[node] (b0) at (1.8,1.2) {$Z'_0$};
\node[node] (b1) at (3.3,1.2) {$Z'_1$};
\node[node] (b2) at (4.8,1.2) {$Z'_2$};

\draw[arr] (a0) -- (a1);
\draw[arr] (a1) -- (a2);

\draw[arr] (b0) -- (b1);
\draw[arr] (b1) -- (b2);

\draw[sep] (0.9,1.75) -- (0.9,0.65);
\node[small, rotate=90] at (0.75,1.2) {reset};

\node[small] at (0.9,0.55) {Break history dependence and suppress accumulation};

\draw[line width=0.6pt] (-3,0.1) -- (0,0.1);
\draw[line width=2.0pt] (-3,0.1) -- (-1.8,0.1);
\draw[line width=1.2pt] (-1.8,0.1) -- (0,0.1);

\draw[line width=0.6pt] (1.8,0.1) -- (4.8,0.1);
\draw[line width=2.0pt] (1.8,0.1) -- (3.0,0.1);
\draw[line width=1.2pt] (3.0,0.1) -- (4.8,0.1);

\node[small, anchor=west] at (-3.0,-0.25) {Each segment satisfies: $\ell_i < L^*$};
\end{scope}

\end{tikzpicture}
\caption{A process-level view of degradation in autoregressive reasoning. Even in linear tasks without branching search, a single long execution edge exhibits exponential decay of decision advantage due to accumulated uncertainty; in contrast, segmentation with resets suppresses accumulation, keeping each segment within a stable regime.}
\label{fig:process_stability}
\end{figure*}

\subsection{Problem Setup: Reasoning Execution Rather Than Learning}
\label{sec:problem_setup}

We consider a fixed, fully trained autoregressive model executing a reasoning process of length $L$ along a single trajectory under a given task and a fixed reasoning policy. In this setting:

\begin{itemize}
    \item No cross-episode learning or policy updates are involved;
    \item Questions of policy optimality or search efficiency are outside the scope of analysis;
    \item The central inquiry is whether the execution of reasoning can stably maintain directional alignment toward the correct objective as the trajectory length increases.
\end{itemize}

This perspective complements the analysis presented in Appendix~A. While Appendix~A studies learning-theoretic sample complexity in sparse long-horizon tasks (TW-HSF$_\varepsilon$), demonstrating exponential growth in required samples when identifiable structural signals are absent, the present chapter focuses on a different question: even assuming learning has been completed and the reasoning policy is fixed, can a single execution trajectory maintain its reliability as the reasoning horizon extends?

\subsection{Decision Advantage and Noise Accumulation}
\label{sec:decision_advantage}

We characterize the \emph{directionality} of a reasoning trajectory from an information-theoretic perspective. Let $G$ denote the target proposition (the correct conclusion), and $\neg G$ its negation. We define $Z_t$ as the internal latent state of the model at step $t$, which encapsulates the accumulated context and implicit representations up to that point.

\begin{definition}[Decision Advantage]
The decision advantage at step $t$, denoted by $\rho_t \in [-1, 1]$, is defined as:
\begin{equation}
\rho_t \triangleq \mathbb{P}(G \mid Z_t) - \mathbb{P}(\neg G \mid Z_t).
\end{equation}
\end{definition}

A positive value $\rho_t > 0$ indicates that the model remains directionally aligned with the target objective. Conversely, $\rho_t \approx 0$ signifies a loss of effective directionality, where the model enters a regime of high uncertainty or "hallucination," making correct recovery increasingly improbable.

We model the autoregressive update as a stochastic dynamical process:
\begin{equation}
Z_{t+1} = f(Z_t) + \varepsilon_t,
\end{equation}
where $f$ represents the ideal transition function of the reasoning policy, and $\varepsilon_t$ denotes the accumulated noise—comprising approximation errors, sampling variance, and representational decay. 

The central premise of our analysis is that in long-horizon sequences, these perturbations $\varepsilon_t$ typically accumulate rather than cancel out, leading to a monotonic erosion of the decision advantage.


\subsection{Theorem A: Process Collapse in Long-Horizon Autoregressive Reasoning}
\label{sec:theorem_statement}
We analyze long-horizon autoregressive reasoning under a set of idealized structural assumptions. 
The following result should be interpreted as a structural stability theorem under contraction-like dynamics, 
rather than a tight characterization of any specific model architecture.

\begin{itemize}
    \item \textbf{Finite representational capacity.} The internal state admits bounded informational precision, preventing lossless accumulation of arbitrarily long histories.

    \item \textbf{Persistent stochastic perturbations.} Each autoregressive update introduces non-zero uncertainty, arising from approximation error, sampling noise, or representational aliasing.

    \item \textbf{Single-path execution.} Reasoning proceeds along a fixed trajectory without branching search, external reset mechanisms, or explicit backtracking.

    \item \textbf{Absence of structural grounding.} Intermediate states lack external structural signals capable of fully eliminating accumulated uncertainty.
\end{itemize}

These assumptions capture common constraints in practical autoregressive reasoning systems rather than properties of any specific model architecture.

We model the reasoning dynamics as a stochastic autoregressive process:
\begin{equation}
Z_{t+1} \sim K(\cdot \mid Z_t),
\end{equation}
where $K$ denotes the induced transition kernel governing state evolution.

\begin{theorem}[Process Collapse in Long-Horizon Autoregressive Reasoning]
Under mild contraction assumptions on the autoregressive transition kernel, suppose the stochastic transition kernel admits a contraction coefficient $\eta < 1$ in total variation distance. Then the decision advantage along a single-path reasoning trajectory of length $L$ satisfies
\begin{equation}
\rho(L) \le \rho_0 e^{-\gamma L},
\end{equation}
where $\gamma = -\ln \eta > 0$ and $\rho_0$ denotes the initial decision advantage.

Consequently, there exists a critical length
\begin{equation}
L^* = \frac{1}{\gamma} \ln \frac{\rho_0}{\tau},
\end{equation}
such that for $L > L^*$ the decision advantage falls below a reliability threshold $\tau$, beyond which directional inference becomes unstable.
\end{theorem}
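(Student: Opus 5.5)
The plan is to read $\rho_t$ as a total-variation-type quantity between two conditional laws of the latent state, and then apply the strong data-processing (Dobrushin) inequality for the kernel $K$ once per step. Concretely, let $\mu_t^{G}$ and $\mu_t^{\neg G}$ be the laws of $Z_t$ conditioned on $G$ and on $\neg G$. Since the same kernel $K$ drives the evolution under both hypotheses (single-path execution with no grounding signal), we have $\mu_{t+1}^{G}=\mu_t^{G}K$ and $\mu_{t+1}^{\neg G}=\mu_t^{\neg G}K$. The first step is to bound the achievable decision advantage by total variation. For any decision statistic of $Z_t$, the difference between the probability of deciding $G$ under $G$ and under $\neg G$ is at most $\|\mu_t^{G}-\mu_t^{\neg G}\|_{\TV}$. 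With a uniform prior, the Bayes quantity $\mathbb{E}|\mathbb{P}(G\mid Z_t)-\mathbb{P}(\neg G\mid Z_t)|$ equals exactly this TV distance. I would therefore define $\rho(L)$ as this averaged advantage, or upper-bound it by the TV distance for a general prior with a constant absorbed into $\rho_0$. The identification $\rho_0=\|\mu_0^{G}-\mu_0^{\neg G}\|_{\TV}$ is part of this step.

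The second step is the contraction itself. By definition of the Dobrushin coefficient $\eta=\sup_{x,x'}\|K(\cdot\mid x)-K(\cdot\mid x')\|_{\TV}<1$, we have $\|\mu K-\nu K\|_{\TV}\le \eta\|\mu-\nu\|_{\TV}$ for any pair of distributions. I would prove this briefly by writing $\mu-\nu=\|\mu-\nu\|_{\TV}(\alpha-\beta)$ with probability measures $\alpha,\beta$, and then coupling. Iterating $L$ times gives $\rho(L)\le \eta^{L}\rho_0=\rho_0e^{-\gamma L}$ with $\gamma=-\ln\eta>0$. If the kernel is time-varying, the same argument with $\eta_t\le\eta$ suffices.

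The critical length is then pure algebra: $\rho_0e^{-\gamma L}<\tau$ if and only if $L>\gamma^{-1}\ln(\rho_0/\tau)=L^*$. This is meaningful when $0<\tau<\rho_0$; otherwise $L^*\le 0$ and the statement is trivial.

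The main obstacle is the modeling step. The pointwise $\rho_t=\mathbb{P}(G\mid Z_t)-\mathbb{P}(\neg G\mid Z_t)$ is a random variable and is not itself contracted pathwise, so the theorem only holds for an averaged or worst-case version of the advantage. I would make this explicit by stating the result for $\mathbb{E}|\rho_t|$, or for the best achievable advantage. I would also note that the assumption that $G$ enters only through $Z_0$ is precisely the "absence of structural grounding" hypothesis, which is what makes $G\to Z_0\to\cdots\to Z_L$ a Markov chain and lets the data-processing inequality apply.
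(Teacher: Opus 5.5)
Your proposal is correct and follows the paper's own argument (Appendix~B) essentially step for step. Both conditional laws of $Z_t$ evolve under the same kernel, the Bayes advantage $1-2P_e(t)=\mathbb{E}\,|\mathbb{P}(G\mid Z_t)-\mathbb{P}(\neg G\mid Z_t)|$ equals their TV distance under balanced priors, TV contraction is iterated, and $L^*$ comes from solving $\rho_0e^{-\gamma L}=\tau$; you additionally derive the contraction from the Dobrushin coefficient, which the paper simply assumes.

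One caveat concerns your aside on general priors. For a prior $\pi\neq\tfrac12$ the averaged advantage is at least $|2\pi-1|$ and never decays, so no constant absorbed into $\rho_0$ rescues the bound. What does hold is that the advantage in excess of the baseline $|2\pi-1|$ is at most $2\min(\pi,1-\pi)$ times the TV distance.
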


\paragraph{Proof sketch.}
Let $\Delta_t$ denote the distinguishability between state distributions conditioned on correct versus incorrect reasoning trajectories, measured in total variation distance. Persistent stochastic perturbations imply that the induced Markov operator exhibits contraction:

\begin{equation}
\Delta_{t+1} \le \eta \, \Delta_t,
\end{equation}

where $\eta < 1$ reflects cumulative uncertainty introduced at each autoregressive step. Iterating yields

\begin{equation}
\Delta_t \le \eta^t \Delta_0.
\end{equation}

Since the decision advantage $\rho_t$ is bounded by distributional distinguishability, i.e.,
\begin{equation}
\rho_t \le \Delta_t,
\end{equation}
we obtain exponential decay:

\begin{equation}
\rho_t \le \rho_0 e^{-\gamma t}.
\end{equation}

Detailed derivations and alternative formulations based on entropy accumulation are provided in Appendix~A.3. The contraction assumption is not derived here but serves as an idealized abstraction capturing cumulative uncertainty effects in autoregressive execution.

\paragraph{Dynamical interpretation.}
The exponential decay bound reflects a contraction-like behavior in decision advantage under noisy autoregressive dynamics. Persistent perturbations progressively reduce directional information, implying that sufficiently long single-path execution inevitably approaches a high-uncertainty regime even in the absence of branching complexity.

\subsection{Distinction from Problem-Complexity Lower Bounds}
\label{sec:complexity_distinction}

In classical reinforcement learning and planning frameworks, long-horizon failure is predominantly attributed to problem complexity. For instance, search spaces expand exponentially with branching factor $b$, leading to exploration and credit-assignment challenges that scale as $b^L$. Within this paradigm, reasoning failure is primarily interpreted as a consequence of combinatorial explosion.

In contrast, Theorem~A identifies an orthogonal source of failure:

\begin{itemize}
    \item When $b = 1$ (i.e., purely linear tasks), search complexity effectively vanishes;
    \item Nevertheless, even in the absence of branching, the decision advantage decays exponentially with reasoning horizon $L$ due to accumulation of autoregressive noise.
\end{itemize}

Therefore, long-horizon reasoning difficulty cannot be fully explained by problem complexity alone. Even in tasks where the branching factor equals one, autoregressive execution remains subject to intrinsic process-level instability. This theoretical distinction is further supported by the Chain Sensitivity analysis in Section~5, which provides empirical evidence for the existence of a process-level bottleneck.

\subsection{Structural Consequence: Why Stable Reasoning Requires Discretization}
\label{sec:structural_consequence}

The direct implication of Theorem~A is not the fundamental impossibility of long-horizon reasoning, but rather the infeasibility of sustaining it through an indefinitely extended single linear autoregressive trajectory.

As established by Theorem~A, once the reasoning length $L$ exceeds the critical scale $L^*$, the decision advantage $\rho(L)$ falls below the threshold $\tau$ required for reliable logical discrimination. If reasoning continues along a single trajectory, the process inevitably enters a regime characterized by dissipation of directional information and increasing dominance of stochastic noise.

To avoid such process-level collapse, reasoning execution must admit a partition into discrete segments such that each continuous autoregressive edge $\ell_i$ remains within the stable regime:
\begin{equation}
\ell_i < L^*.
\end{equation}

This constraint implies a structural transformation: stable long-horizon reasoning can no longer be represented as a monolithic linear chain. Instead, it evolves into multiple finite-length edges interconnected by stabilizing nodes that perform state consolidation or informational reset. Topologically, these requirements naturally correspond to DAG-like structures.

Importantly, this discretization is not merely an engineering preference but emerges as a structural consequence of maintaining stability under autoregressive dynamics. In this framework, DAG-like reasoning structures arise as a natural organizational form shaped by informational stability constraints.

\subsection{Critical Length $L^*$ as a Diagnostic Scale for Reasoning Stability}
\label{sec:critical_length}

Existing literature frequently characterizes long-horizon reasoning failure through qualitative observations, such as coherence degradation, increased hallucination, or logical drift. While these descriptions capture empirical phenomena, they lack a predictive framework and do not address the precise boundary conditions under which failure becomes inevitable.

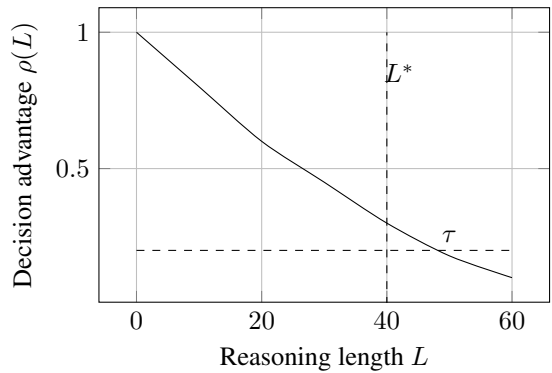
\begin{figure}[t]
\centering
\begin{tikzpicture}
\begin{axis}[
    width=0.9\linewidth,
    height=5.5cm,
    xlabel={Reasoning length $L$},
    ylabel={Decision advantage $\rho(L)$},
    grid=major,
]

\addplot[smooth] coordinates {
(0,1)
(10,0.8)
(20,0.6)
(30,0.45)
(40,0.3)
(50,0.18)
(60,0.1)
};

\addplot[dashed] coordinates {(0,0.2) (60,0.2)};
\node at (axis cs:50,0.25) {$\tau$};

\draw[dashed] (axis cs:40,0) -- (axis cs:40,1);
\node at (axis cs:42,0.85) {$L^*$};

\end{axis}
\end{tikzpicture}
\caption{
Conceptual phase transition induced by intrinsic stability limits.
Decision advantage decays exponentially and crosses the critical threshold at $L^*$.
}
\label{fig:phase_transition}
\end{figure}

As illustrated in Figure~\ref{fig:phase_transition}, the decision advantage crosses the critical threshold at $L^*$, marking the onset of instability.

Theorem~A addresses this gap by introducing the \emph{critical length} $L^*$, which provides a quantitative diagnostic scale for reasoning stability. Derived from our analysis in Section~3, $L^*$ is characterized by the intrinsic decay rate $\gamma$ and the initial decision advantage $\rho_0$:
\begin{equation}
L^* = \frac{1}{\gamma} \ln \frac{\rho_0}{\tau}.
\end{equation}

This result allows the sustainability of reasoning trajectories to be interpreted as an intrinsic dynamical property rather than purely an empirical observation. Concretely, $\gamma$ captures the average rate at which goal-relevant information dissipates during autoregressive updates, while $\rho_0$ reflects the initial directional clarity of the process. Together, they define a fundamental stability horizon. When the reasoning length $L$ approaches or exceeds $L^*$, process-level instability dominates, driving the system toward a high-uncertainty regime even in tasks with minimal structural complexity.

From this perspective, $L^*$ serves as a \textbf{stability ruler} for autoregressive inference. Moving beyond retrospective explanation, it offers a principled criterion for anticipating potential failure points of single-path execution under specific model and task configurations.

Importantly, the introduction of $L^*$ also suggests structural constraints on stabilization strategies: to sustain coherent long-horizon reasoning, mechanisms must ensure that elementary reasoning segments remain below this critical scale. This insight provides a theoretical foundation for reinterpreting and optimizing structured reasoning paradigms.

\subsection{Summary}
\label{sec:chapter3_summary}

In this chapter, we characterized long-horizon autoregressive reasoning as a dynamical process and established the existence of an intrinsic \emph{process-level instability}. We demonstrated that even in the absence of branching complexity, the decision advantage decays exponentially with reasoning length, leading to the emergence of a fundamental stability horizon, $L^*$.

This result implies a structural requirement: stable long-horizon reasoning cannot be sustained through a monolithic, uninterrupted trajectory. To preserve logical coherence, reasoning execution must admit a partition into discrete segments such that each continuous edge $\ell_i$ satisfies the stability criterion:
\begin{equation}
\ell_i < L^*.
\end{equation}

This constraint provides the theoretical basis for transitioning from linear chains toward structured graph-like representations. In the following chapter, we expand this framework by examining the functional roles of reasoning nodes—specifically their capacity for state consolidation and entropy regulation. We further analyze how node-based operations suppress the accumulation of historical uncertainty, enabling reliable reasoning beyond the critical scale $L^*$.

\section{Inference Segmentation Necessity and Structural Implications}
\label{sec:chapter4}

This chapter synthesizes the theoretical foundations of Theorem~A (Chapter~3) with the empirical findings presented in Chapter~5, establishing a unified structural framework for understanding long-horizon reasoning.

We contend that the failure of long-horizon reasoning cannot be attributed solely to problem complexity. Beyond the conventional focus on combinatorial search or credit-assignment challenges, intrinsic stability limits—arising from the dynamics of autoregressive execution—play a fundamental role. From this perspective, structured reasoning techniques such as Chain-of-Thought \cite{wei2022chain}, Tree-of-Thought \cite{yao2023tree}, and Graph-of-Thought \cite{besta2024graph}, traditionally introduced as heuristic improvements, can instead be interpreted as structural responses to stability constraints. Structural segmentation and reset mechanisms therefore emerge not merely as engineering choices, but as necessary conditions for preventing process-level collapse.

By integrating rigorous theoretical analysis with empirical validation, we examine how these stability constraints shape the structural requirements of reasoning execution. This analysis motivates a transition in reasoning architectures: moving away from monolithic linear trajectories toward segmented, topology-aware configurations capable of sustaining coherence over extended horizons.

\subsection{Failure Mode Taxonomy for Long-Horizon Reasoning}
\label{sec:failure_taxonomy}

Based on the theoretical and empirical findings of this study, failures in long-horizon reasoning can be organized into three mutually orthogonal sources:

\begin{enumerate}
\item \textbf{Problem Complexity (Combinatorial Explosion).} 
When the branching factor $b$ of a task scales with the reasoning horizon $L$, the required computational and sample complexity increases exponentially ($b^L$). Our synthetic Track~B experiments and the analysis in Appendix~A illustrate this classical lower bound. The origin of this failure mode lies in the combinatorial explosion of feasible trajectories, a phenomenon extensively documented in planning and reinforcement learning literature \cite{kaelbling1998planning,sutton1999between}.

\item \textbf{Process Instability (Information-Theoretic Decay).} 
Critically, even in strictly linear settings ($b=1$), long-horizon autoregressive execution accumulates internal decision noise and residual uncertainty. As demonstrated by our Chain Sensitivity analysis, the decision advantage $\rho(L)$ is predicted to decay exponentially with reasoning length under Theorem~A. This mode is fundamentally orthogonal to search complexity and arises from intrinsic dynamical instability in autoregressive execution.

\item \textbf{Ecological Traps (Dynamical Stagnation).} 
In complex interactive environments, the reasoning process may further be compromised by historical inertia, local attractors, or oscillatory regimes. Our Track~A TextWorld experiments \cite{cote2018textworld} show that unstructured progression frequently falls into such ecological traps, where reasoning stagnates despite the sufficiency of the model's underlying knowledge.
\end{enumerate}

This taxonomy underscores that the feasibility of long-horizon reasoning is governed not only by task difficulty but also by the intrinsic stability of the execution process. Consequently, addressing problem complexity through search or scaling alone is insufficient to guarantee sustained reasoning performance in extended horizons.

\subsection{Topology Under Critical Edge-Length Constraints (DAG as a Canonical Realization)}
\label{sec:topology_constraint}

Theorem~A in Chapter~3 establishes that a single autoregressive reasoning edge possesses an intrinsic critical length $L^*$ determined by the exponential decay rate of decision advantage. Empirical results in Chapter~5 further demonstrate that structured segmentation and reset mechanisms shift the effective failure scale from the global reasoning horizon to local segment lengths. Taken together, these findings suggest that reasoning processes aiming to maintain stability over extended horizons must avoid unbounded single-path extension.

Consequently, under the stability constraints defined in Chapter~3, any robust reasoning topology must satisfy
\begin{equation}
\ell_i < L^*,
\end{equation}
for each reasoning edge $\ell_i$. Violating this condition drives the system toward a high-uncertainty regime as decision advantage decays beyond the stability threshold. This constraint introduces a \textbf{minimum density of stabilization nodes} along the reasoning horizon, motivating a transition from monolithic linear trajectories toward structured graph-based representations observed in prior structured reasoning approaches such as Chain-of-Thought \cite{wei2022chain}, Tree-of-Thought \cite{yao2023tree}, and Graph-of-Thought \cite{besta2024graph}.

From a topological perspective, directed acyclic graphs (DAGs) provide a canonical realization of structures that satisfy local stability constraints while preserving global reachability. In this framework:

\begin{itemize}
\item \textbf{Edges} correspond to finite-length autoregressive execution segments constrained by $\ell_i < L^*$;
\item \textbf{Vertices} correspond to stabilization points where state consolidation, compression, or reset operations occur;
\item \textbf{Acyclicity} prevents recursive amplification of historical uncertainty through closed feedback loops.
\end{itemize}

Within this interpretation, nodes function as informational anchors analogous to information bottlenecks \cite{tishby2000information}. Their role is to project accumulated history into compressed yet task-relevant representations, filtering high-entropy noise while preserving goal-relevant mutual information. From this perspective, node formation emerges as a structural response to entropy accumulation inherent in autoregressive dynamics rather than as a purely heuristic engineering choice.

If long-horizon reasoning is to remain stable, execution must admit decomposition into sub-critical segments whose effective lengths satisfy $\ell_i < L^*$. Such segmentation naturally induces graph-like reasoning topologies, among which DAGs represent a canonical and practically effective implementation. Importantly, DAGs are not the only possible realization; rather, they serve as representative structures emerging from edge-length constraints and the necessity of reset mechanisms.

\subsection{Quantitative Structural Interpretation of System-2 Behavior: From Heuristic Techniques to Critical Response}
\label{sec:system2_structural}

In psychology and artificial intelligence literature, ``System~2'' reasoning is commonly used to describe slow, deliberate, and reflective cognitive processes \cite{kahneman2011thinking}. Despite its widespread adoption, System~2 behavior has largely been treated as a qualitative phenomenon, lacking a unified quantitative explanation regarding when and why such behavior emerges.

Within the framework developed in this work, System~2 behavior can be reinterpreted as a structural response arising when a single-path reasoning trajectory approaches the critical scale $L^*$. As linear autoregressive execution nears this intrinsic stability limit, maintaining decision advantage requires structural adaptation. From this perspective, System~2 is not merely a cognitive style but can be understood as a form of \emph{structural phase transition} that occurs when linear reasoning approaches its stability boundary.

To characterize this critical response, we introduce three complementary but non-essential diagnostic quantities. These correspond respectively to dynamical degradation, critical transition signals, and structural scale constraints, forming a multi-scale diagnostic framework spanning dynamical and topological levels.

\paragraph{(1) Decision Advantage Decay Rate.}
\begin{equation}
\gamma_t = -\frac{1}{t}\ln\frac{\rho_t}{\rho_0}.
\end{equation}
When $\gamma_t$ significantly exceeds the intrinsic decay rate $\gamma$, it indicates accelerated loss of directional information, suggesting that the reasoning trajectory is approaching its stability limit.

\paragraph{(2) Conditional Entropy Spike Indicator.}
\begin{equation}
\Delta H_t = H(X_{t+1} \mid H_t) - H(X_{t+1} \mid \mathcal{T}(H_t, s^*)).
\end{equation}
When $\Delta H_t$ exceeds its typical background distribution, accumulated historical noise begins to dominate predictive capacity, implying diminishing or even negative marginal returns from extending the reasoning trajectory.

\paragraph{(3) Effective Edge-Length Ratio.}
\begin{equation}
r_t = \frac{t - t_{\text{last reset}}}{L^*}.
\end{equation}
For instance, when $r_t > 0.8$, single-path reasoning enters a highly unstable regime in which attempts to maintain consistency solely by extending the sequence are likely to fail rapidly.

Together, these indicators define a computable critical region. When any two exceed their respective thresholds, the reasoning process can be interpreted as entering a high-instability regime. Importantly, these quantities are not proposed as control mechanisms or precise measurement tools; rather, they function as diagnostic signals for identifying when linear reasoning approaches a dynamically unsustainable region.

Under this interpretation, existing ``slow-thinking'' techniques can be reframed as different responses to the critical scale $L^*$. Chain-of-Thought prompting \cite{wei2022chain} primarily delays collapse by introducing semantic constraints that effectively reduce the decay rate $\gamma$. However, because the underlying topology remains linear, collapse becomes increasingly likely as $r_t \rightarrow 1$. In contrast, Tree-of-Thought approaches \cite{yao2023tree} introduce explicit branching or structural intervention before the critical boundary is crossed, effectively partitioning reasoning into segments satisfying $\ell_i < L^*$.

Consequently, within this framework, System~2 behavior emerges not as an ad hoc engineering heuristic but as a quantifiable structural response to the critical length $L^*$. The diagnostic quantities ($\gamma_t$, $\Delta H_t$, $r_t$) primarily serve as interpretive tools bridging theory and empirical observation. Precise numerical estimation and large-scale operational deployment remain important directions for future work.

\subsection{Scope and Limitations}
\label{sec:limitations}

To ensure theoretical rigor and avoid overgeneralization, we explicitly delineate the scope and boundary conditions of the present work:

\begin{enumerate}
\item \textbf{Node Generation and Algorithmic Optimization.} 
This study characterizes the functional necessity of stabilization nodes but does not prescribe specific algorithms for their automated generation or optimization. The mechanisms governing adaptive segmentation or optimal node placement remain compelling directions for future research.

\item \textbf{Optimality of Segmentation Strategies.} 
Our analysis establishes the \emph{structural necessity} of segmentation and reset mechanisms for maintaining informational stability. However, we do not investigate the mathematical optimality of particular segmentation policies or the associated computational overhead of frequent state consolidation.

\item \textbf{Generalization Beyond Single-Agent Autoregressive Execution.} 
While scenarios involving multi-agent collaboration, tool invocation, or external memory architectures may be conceptually congruent with the proposed framework, such extensions remain beyond the scope of this foundational study.
\end{enumerate}

These limitations clarify the functional boundaries of our inquiry but do not undermine the central theoretical conclusion: the existence of intrinsic stability constraints that govern long-horizon autoregressive reasoning.

\subsection{Summary}
\label{sec:chapter4_summary}

By synthesizing the theoretical analysis of Chapter~3 with the structural framework developed in this chapter, we arrive at a rigorous conclusion: if long-horizon reasoning relies on the continuous extension of a single autoregressive trajectory, its stability is inherently bounded by a critical length $L^*$ under the stated assumptions. Any strategy that attempts to extend reasoning indefinitely without structural intervention will ultimately lead to the collapse of decision advantage. Consequently, the transition from linear chains toward DAG-like or graph-structured execution—featuring bounded edge lengths and stabilization nodes—should be understood not merely as an engineering heuristic, but as a structural consequence induced by the dynamics of autoregressive instability.

Within this framework, structured reasoning topologies give rise to a set of empirically testable structural predictions that transcend specific model architectures:

\begin{enumerate}
\item \textbf{Existence of Performance Cliffs:} Reasoning performance is expected to exhibit sharp degradation once single-path execution approaches the critical scale $L^*$;

\item \textbf{Efficacy of Segmentation:} Partitioning execution into sub-critical edges ($\ell_i < L^*$) and introducing informational resets should significantly mitigate the decay of decision advantage;

\item \textbf{Cross-Domain Consistency:} These effects should manifest consistently across synthetic benchmarks, mechanistic analyses, and complex interactive environments, largely independent of semantic complexity or specific task formulations.
\end{enumerate}

Chapter~5 is explicitly designed to validate these structural predictions. Rather than discovering structure through empirical observation alone, we employ targeted evaluations to test necessary conditions derived from theoretical analysis. Through complementary experiments—including synthetic long-horizon tasks, branch-free chain sensitivity analysis, and TextWorld benchmarks \cite{cote2018textworld}—we demonstrate that adherence to these structural constraints plays a critical role in sustaining reasoning stability. Conversely, ignoring such constraints leads to systematic degradation patterns consistent with the instability framework.

In summary, the structural conclusions established here characterize the \emph{necessary conditions} for reliable long-horizon autoregressive inference. While the design of optimal or fully operational stabilization mechanisms remains an open research direction, the requirement for bounded linear extension establishes a principled structural constraint that underlies the broader framework of \emph{structural governance} developed in this work.

\section{Experimental Evaluation}
\label{sec:experiments}

This chapter presents a set of complementary experiments designed to empirically examine the central thesis introduced in Chapter~3: that failures in long-horizon reasoning arise not solely from task-level complexity, but also from intrinsic stability limits inherent to autoregressive execution. Furthermore, we investigate whether structural segmentation and informational reset mechanisms can mitigate these failure modes under stability constraints derived from our theoretical analysis.

To align theoretical predictions with observable behavior, we adopt the diagnostic framework introduced in Section~4.3. Rather than treating instability as a monolithic failure phenomenon, we interpret long-horizon degradation as the manifestation of identifiable dynamical signals. This perspective enables empirical observations to be analyzed through a process-level lens, bridging theoretical dynamics with measurable behavioral outcomes.

The experimental evaluation is organized into three distinct tracks, each targeting a different level of validation:

\begin{itemize}
\item \textbf{Track B (Synthetic Tasks):} Evaluates the exponential sample-complexity lower bounds derived in Appendix~A. These experiments demonstrate how structural segmentation shifts the effective computational bottleneck from the global horizon $L$ to the longest effective sub-critical edge length $\ell_{\max}$.

\item \textbf{Chain Sensitivity Analysis (Mechanistic Study):} Isolates collapse behavior arising purely from process instability within a branch-free environment. By eliminating search complexity and semantic ambiguity, this setup provides a controlled evaluation of the exponential decay predicted by Theorem~A.

\item \textbf{Track A (TextWorld Environments):} Examines structural governance mechanisms in interactive language-based environments \cite{cote2018textworld}, assessing whether topology-aware reasoning structures improve forward progress and suppress oscillatory behaviors under realistic conditions.
\end{itemize}

Within the diagnostic framework, we utilize three quantitative indicators to characterize critical behavior: decision advantage decay ($\gamma_t$), conditional entropy spikes ($\Delta H_t$), and the effective edge-length ratio ($r_t$). These quantities are not proposed as optimization objectives or control variables; rather, they function as \emph{observational instruments} for detecting when reasoning trajectories approach the instability regime predicted by Theorem~A.

The primary objective of this evaluation is therefore not the direct numerical validation of these indicators, but to assess whether reasoning behavior exhibits dynamical patterns consistent with theoretical predictions. Specifically, we examine:

\begin{enumerate}
\item exponential-scale transitions in synthetic long-horizon tasks;
\item entropy-driven collapse phenomena in branch-free linear chains;
\item improved stability in interactive environments when reasoning segments are constrained below the critical scale $L^*$.
\end{enumerate}

Under this diagnostic lens, the experimental results presented in the following sections should be understood as empirical projections of the theoretical critical scales and dynamical boundaries developed in the preceding chapters.

\subsection{Track B: Sample Complexity Analysis in Synthetic Long-Horizon Tasks}
\label{sec:trackB}

\subsubsection{Experimental Setup}

Guided by the theoretical framework of the TextWorld Hard Sparse Family (TW-HSF$_\varepsilon$) defined in Appendix~A, we construct a synthetic long-horizon task environment designed to isolate the structural bottleneck of sparse reasoning. The environment enables high-throughput sampling while preserving the essential property that success requires the exact execution of a unique action sequence of length $L$. Any deviation collapses the trajectory into failure, and no intermediate feedback is provided.

Under the \textbf{unstructured baseline}, the success probability per episode is analytically characterized by
\begin{equation}
p = |\mathcal{A}|^{-L},
\end{equation}
where $|\mathcal{A}|$ denotes the action space size. Consequently, the episodes-to-success follow a geometric distribution whose expected value grows exponentially with $L$, consistent with the theoretical lower bound derived in Appendix~A.

In the \textbf{structured condition (Landmarks)}, the global horizon $L$ is partitioned into $K$ subsegments $(\ell_1, \dots, \ell_K)$, each independently solvable. To simulate imperfect structural signals, we introduce a \emph{landmark omission probability} $p_{\mathrm{drop}}$, whereby failed landmark detection stochastically merges adjacent segments, thereby increasing the effective uninterrupted reasoning span and raising the longest effective segment length $h_{\max}$.

\subsubsection{Results and Analysis}

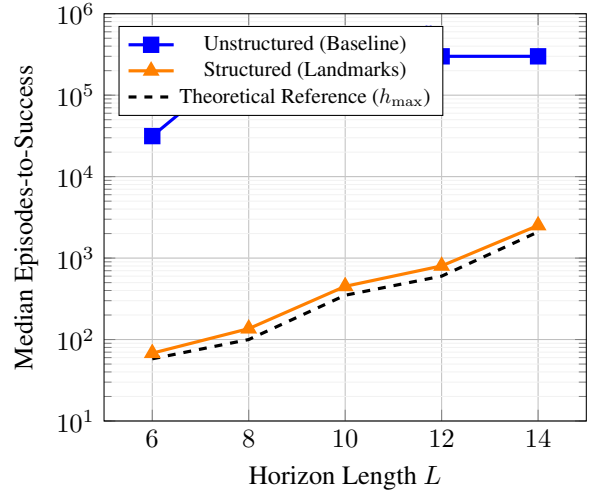
\begin{figure}[t]
  \centering
 \begin{tikzpicture}
\begin{semilogyaxis}[
    width=0.95\linewidth,
    height=7cm,
    xlabel={Horizon Length $L$},
    ylabel={Median Episodes-to-Success},
    xmin=5, xmax=15,
    ymin=10, ymax=1000000,
    xtick={6,8,10,12,14},
    grid=both,
    grid style={line width=.1pt, draw=gray!10},
    major grid style={line width=.2pt, draw=gray!50},
    legend pos=north west,
    legend style={nodes={scale=0.8, transform shape}},
    every axis plot/.append style={line width=1.2pt},
    mark size=2.5pt
]

\addplot[color=blue, mark=square*] coordinates {
    (6, 31452)
    (8, 300000)
    (10, 300000)
    (12, 300000)
    (14, 300000)
};
\addlegendentry{Unstructured (Baseline)}

\addplot[color=orange, mark=triangle*] coordinates {
    (6, 68)
    (8, 136)
    (10, 450)
    (12, 802)
    (14, 2515)
};
\addlegendentry{Structured (Landmarks)}

\addplot[color=black, dashed, no marks] coordinates {
    (6, 58)
    (8, 100)
    (10, 350)
    (12, 600)
    (14, 2096)
};
\addlegendentry{Theoretical Reference ($h_{\max}$)}

\node[blue, font=\small] at (axis cs: 10, 500000) {Capped at $3 \times 10^5$};

\end{semilogyaxis}
\end{tikzpicture}
  \caption{
  \textbf{Track B: Exponential bottleneck transfer under structural segmentation.}
  Median episodes-to-success (log scale) as a function of horizon length $L$ for unstructured (blue) and structured / landmark-based (orange) agents in the synthetic TW-HSF setting.
  The dashed curve shows the theoretical reference scale dominated by the longest effective segment length $h_{\max}$.
  Without structure, sample complexity grows exponentially in $L$, consistent with the $|\mathcal{A}|^{L}$ lower bound (Appendix~A).
  Structural segmentation reconfigures the exponential scaling regime, shifting the dominant dependence from the full horizon $L$ to the longest sub-critical segment $h_{\max}$.
  Increasing landmark omission probability (not shown) increases $h_{\max}$ and restores the high-exponent scaling regime, confirming that structure functions primarily by constraining the uninterrupted autoregressive span rather than by providing additional semantic information.
  }
  \label{fig:trackB_scaling}
\end{figure}

The experimental results reveal three consistent trends.

First, under the unstructured condition, sample complexity exhibits exponential growth as the horizon length $L$ increases. The empirical slope closely matches the theoretical lower bound proportional to $|\mathcal{A}|^{L}$, providing empirical support for the analysis of TW-HSF-type tasks presented in Appendix~A.

Second, structural segmentation fundamentally alters the scaling regime. By partitioning the reasoning trajectory, the dominant exponential dependence shifts from the global horizon $L$ to the longest effective subsegment length $h_{\max}$. Rather than reducing constant factors, structure modifies the governing exponential scale itself, effectively localizing the curse of horizon to sub-critical segments.

Third, sensitivity to the landmark omission probability $p_{\mathrm{drop}}$ demonstrates that the primary role of structural signals is the truncation of autoregressive dependency. As landmarks fail and segments merge, the effective reasoning span increases, and the sample complexity rapidly returns toward the unstructured exponential regime.

From the perspective of the diagnostic framework introduced in Section~4.3, the exponential degradation observed in the unstructured condition corresponds to a dynamical regime dominated by sustained decay of decision advantage, characterized by elevated $\gamma_t$. Structural segmentation enforces resets that interrupt the exponential decay regime predicted by Theorem~A, preventing the reasoning trajectory from approaching the critical scale $L^*$ where instability becomes dominant.

\subsection{Chain Sensitivity Analysis: Process Stability in a Branch-Free Environment}
\label{sec:chain_sensitivity}

\subsubsection{Motivation and Setup}

To rigorously distinguish between failures arising from \emph{problem complexity} and those induced by \emph{process instability}, we construct a fully branch-free one-dimensional task environment. The state space is defined as $\{0,1,\dots,L\}$, where the sole objective is to advance from state $0$ to $L$. The branching factor is fixed at $b=1$, thereby eliminating search space explosion and isolating the intrinsic dynamics of autoregressive execution, consistent with classical planning formulations \cite{kaelbling1998planning,sutton1999between}.

Within this environment, we introduce two controlled sources of stochastic disturbance:

\begin{enumerate}
\item \textbf{Policy Noise:} with probability $\epsilon$, the intended forward action is reversed into a backward step.
\item \textbf{Sticky Trap:} with probability $p$, the previous executed action is inverted, simulating historical inertia and local dynamical trapping.
\end{enumerate}

A structured agent is additionally implemented, performing periodic phase resets every $K$ steps to prevent long-term accumulation of backward drift. Such resets are conceptually related to structured reasoning frameworks that introduce intermediate checkpoints or branching control \cite{yao2023tree,besta2024graph}.

This setup allows us to examine whether long-horizon collapse emerges even when traditional sources of complexity—such as combinatorial branching or semantic ambiguity—are entirely absent.

\subsection{Track A: Real LLM Behavior in TextWorld Environments}
\label{sec:trackA}

\subsubsection{Experimental Design}

To evaluate whether the structural constraints derived from Theorem~A manifest in real interactive environments, we conduct experiments in the TextWorld navigation framework \cite{cote2018textworld}. Unlike synthetic settings, this environment introduces realistic challenges including partial observability, historical dependency, and interactive dynamics.

We evaluate two language models: Gemma~3~4B and GPT-4o. To ensure rigorous and fair comparison, we employ a \textbf{paired-and-cached} experimental design. Within each trial, both the baseline agent and the structurally segmented (Landmarks) agent share identical language model outputs for the same $(\text{room}, \text{action set})$ input. Specifically:

\begin{itemize}
\item Model outputs are cached and reused across both agents;
\item Prompts, environment states, and available actions are strictly identical;
\item Behavioral differences arise solely from structural execution mechanisms.
\end{itemize}

The Landmarks agent introduces two structural governance mechanisms:

\begin{enumerate}
\item \textbf{Phase-boundary reset:} periodic segmentation of reasoning trajectories;
\item \textbf{In-phase edge deduplication:} pruning of redundant transitions within a segment.
\end{enumerate}

No additional semantic information, reward shaping, or model retraining is applied. Consequently, any observed behavioral divergence isolates the effect of reasoning topology rather than differences in model capability.

\subsubsection{Results and Analysis}

We present the results in increasing levels of empirical grounding: first, a trajectory-level visualization illustrating qualitative dynamics; second, quantitative measurements for Gemma~3~4B; and third, replication under GPT-4o.


\begin{figure}[t]
\centering
\begin{tikzpicture}
\begin{axis}[
    width=\columnwidth,
    height=6cm,
    xlabel={Interaction steps},
    ylabel={Distinct rooms visited},
    legend pos=north west,
    grid=major,
]

\addplot[mark=o] coordinates {
(0,0)
(10,6)
(20,10)
(30,12)
(40,13)
(50,13.5)
(60,13.8)
(70,14)
(80,14)
(90,14)
};

\addplot[mark=square*] coordinates {
(0,0)
(10,8)
(20,15)
(30,22)
(40,27)
(50,30)
(60,33)
(70,35)
(80,37)
(90,39)
};

\legend{Baseline, Landmarks}
\end{axis}
\end{tikzpicture}
\caption{
\textbf{Structural trajectory comparison in Track A.}
Baseline execution exhibits early saturation due to oscillatory attractors, while structured execution maintains sustained exploration growth.
}
\label{fig:trackA_trend}
\end{figure}
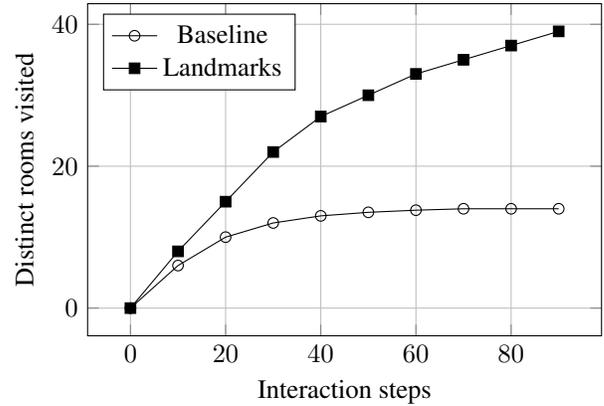

As illustrated in Figure~\ref{fig:trackA_trend}, structural segmentation alters the trajectory-level dynamics of exploration. The baseline execution rapidly saturates, indicating repeated revisitation of a limited subset of states, whereas the landmark-structured agent maintains continued exploratory growth. This qualitative pattern suggests that execution topology directly influences long-horizon stability.


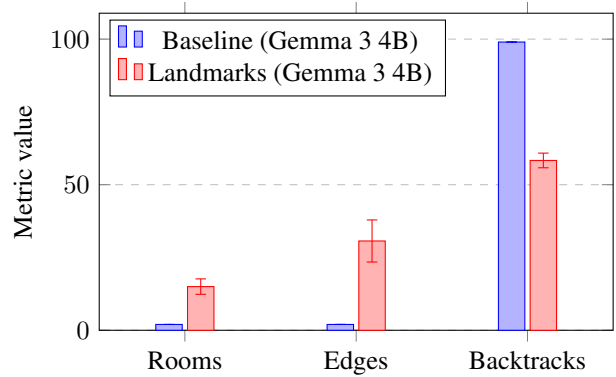
\begin{figure}[t]
\centering
\begin{tikzpicture}
\begin{axis}[
    width=\columnwidth,
    height=5.8cm,
    ybar,
    bar width=10pt,
    ymin=0,
    ylabel={Metric value},
    symbolic x coords={Rooms,Edges,Backtracks},
    xtick=data,
    legend style={at={(0.02,0.98)},anchor=north west},
    enlarge x limits=0.25,
    ymajorgrids=true,
    grid style=dashed,
    error bars/y dir=both,
    error bars/y explicit,
]

\addplot+[] coordinates {
(Rooms,2.00) +- (0,0.00)
(Edges,2.00) +- (0,0.00)
(Backtracks,99.00) +- (0,0.00)
};

\addplot+[] coordinates {
(Rooms,15.00) +- (0,2.65)
(Edges,30.67) +- (0,7.23)
(Backtracks,58.33) +- (0,2.52)
};

\legend{Baseline (Gemma~3~4B), Landmarks (Gemma~3~4B)}
\end{axis}
\end{tikzpicture}
\caption{
\textbf{Track A (Gemma~3~4B): Structural governance under paired-and-cached evaluation.}
Bars show mean $\pm$ std over paired trials. Structural resets increase exploration coverage and reduce oscillatory backtracking.
}
\label{fig:trackA_gemma_bars}
\end{figure}

The Gemma~3~4B results demonstrate that the baseline agent frequently becomes trapped in local oscillatory attractors, resulting in minimal exploration coverage. In contrast, structural segmentation substantially increases the number of explored states while reducing redundant backtracking behavior, suggesting that instability primarily emerges from execution dynamics rather than differences in model capability.


\begin{figure}[t]
\centering
\begin{tikzpicture}
\begin{axis}[
    width=\columnwidth,
    height=5.8cm,
    ybar,
    bar width=10pt,
    ymin=0,
    ylabel={Metric value},
    symbolic x coords={Rooms,Edges,Backtracks},
    xtick=data,
    legend style={at={(0.02,0.98)},anchor=north west},
    enlarge x limits=0.25,
    ymajorgrids=true,
    grid style=dashed,
    error bars/y dir=both,
    error bars/y explicit,
]

\addplot+[] coordinates {
(Rooms,5.00) +- (0,0.00)
(Edges,6.33) +- (0,0.58)
(Backtracks,96.00) +- (0,0.00)
};

\addplot+[] coordinates {
(Rooms,10.00) +- (0,0.00)
(Edges,18.67) +- (0,1.15)
(Backtracks,55.67) +- (0,13.32)
};

\legend{Baseline (GPT-4o), Landmarks (GPT-4o)}
\end{axis}
\end{tikzpicture}
\caption{
\textbf{Track A (GPT-4o): Structural governance under paired-and-cached evaluation.}
Structural resets improve exploration coverage and suppress oscillatory behavior.
}
\label{fig:trackA_4o_bars}
\end{figure}
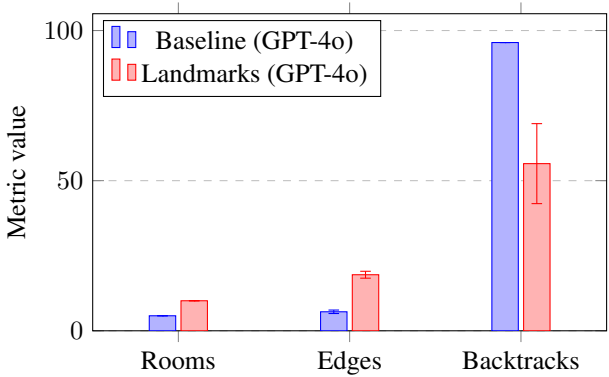

The GPT-4o experiments replicate the same qualitative pattern, indicating that the observed stability gains generalize across distinct model architectures. From the diagnostic perspective introduced in Section~4.3, the baseline condition effectively extends a single autoregressive trajectory, causing the effective edge-length ratio $r_t$ to approach the instability regime predicted by Theorem~A. Structural resets constrain trajectory length, maintaining execution within a more stable dynamical region.

Overall, these findings provide empirical evidence that topology-aware structural governance functions as a structurally grounded mechanism for sustaining coherent long-horizon reasoning in real interactive environments.

\subsection{Discussion: Search-based Compensation vs. Structural Governance}
\label{sec:discussion_branching}

\paragraph{On the Interaction of Branching and Decay.}
While our analysis primarily focuses on linear trajectories in order to isolate intrinsic process instability, introducing a branching factor $b > 1$ (e.g., via Beam Search or Tree-of-Thought \cite{yao2023tree}) induces a stochastic survival dynamic among multiple candidate trajectories. 

Under an independence approximation, let $\rho_t$ denote the per-step decision advantage for a single trajectory. The effective survival probability across $b$ parallel branches can be approximated as:
\begin{equation}
\hat{\rho}_t \approx 1 - (1 - \rho_t)^b,
\end{equation}
suggesting that branching can delay the onset of observable failure by increasing the likelihood that at least one trajectory maintains sufficient directional signal. 

Importantly, this mechanism does not alter the intrinsic decay constant $\gamma$ derived in Theorem~A; instead, it \textit{compensates} for decay through parallel exploration. As established in Appendix~A, however, branching introduces an exponential sample-complexity penalty ($|\mathcal{A}|^L$) with respect to horizon length. Consequently, while branching may extend the \emph{effective operational horizon}, it does so at substantial computational cost. 

By contrast, structural governance via segmentation and reset mechanisms avoids this exponential search overhead by constraining the uninterrupted autoregressive span. Rather than modifying the intrinsic decay dynamics, segmentation prevents execution from entering the instability regime associated with critical effective edge-length ratios ($r_t \to 1$). This distinction highlights a \textbf{fundamental trade-off}: search-based methods trade compute for local stability, whereas topology-based methods ensure global coherence through structural design.

\subsection{Summary}
\label{sec:chapter5_summary}

Across three complementary experimental tracks—synthetic extremes (Track~B), mechanistic isolation (Chain Sensitivity Analysis), and real interactive environments (Track~A)—the results collectively confirm the central theoretical prediction of this work: when reasoning is forced to accumulate along a single uninterrupted autoregressive trajectory, execution inevitably enters an instability regime. Structural segmentation and reset mechanisms therefore emerge not as heuristic enhancements, but as \emph{fundamental prerequisites} for preventing process-level collapse.

Taken together, the observed behavioral patterns align closely with the diagnostic framework introduced in Section~4.3. Rather than representing isolated empirical artifacts, long-horizon failure can be interpreted as a structural phase transition into a dynamical regime characterized by exponential decision-advantage decay, entropy escalation, and critical effective edge-length ratios ($r_t \to 1$). Within this framework, structural segmentation operates by constraining trajectory length below the critical scale $L^*$, thereby preempting entry into the unstable regime predicted by Theorem~A.

These findings provide cross-level empirical validation linking abstract theoretical analysis with observable execution dynamics. Crucially, structural governance simultaneously mitigates classical \emph{problem complexity}—traditionally addressed through search or scaling—and suppresses the \emph{intrinsic process instability} arising from autoregressive dynamics.

From a structural perspective, the experiments indicate that reasoning stability is governed less by semantic capability or prompt engineering than by the \textbf{execution topology} itself. This insight motivates the development of endogenous reset primitives and structured execution frameworks (e.g., SPR-like mechanisms), and suggests a broader shift in long-horizon reasoning research: from optimizing model scale toward governing the structural organization of inference.

\section{Conclusion and Future Work}
\label{sec:conclusion}

This work reframes reasoning from a scaling problem to a stability-governed dynamical system. We establish that failures in long-horizon reasoning originate not merely from task-level complexity, but from \emph{intrinsic dynamical instability} inherent to the autoregressive execution process itself.

From a theoretical perspective, Theorem~A establishes that the decision advantage along a single-path trajectory inevitably decays, defining a \textbf{fundamental stability horizon} $L^*$. Complementarily, Appendix~A demonstrates that without structural grounding, long-horizon tasks exhibit exponential learning complexity in the worst case. Together, these results suggest that stability constraints arise from the interaction between information-theoretic degradation and sequential execution dynamics.

Our analysis further indicates that DAG-like reasoning topologies should not be viewed as arbitrary architectural preferences, but as canonical organizational forms that naturally emerge under intrinsic stability constraints. This perspective provides a structural reinterpretation of recent advances in structured reasoning methods, including chain-of-thought prompting \cite{wei2022chain}, tree-based reasoning \cite{yao2023tree}, and graph-based reasoning frameworks \cite{besta2024graph}, positioning them as implicit responses to stability limitations rather than purely heuristic design choices.

Empirically, our three-track evaluation validates these predictions across multiple levels of abstraction. Track~B demonstrates the reconfiguration of exponential bottlenecks through structural segmentation; the mechanistic chain-sensitivity study isolates process-level collapse independent of search complexity; and Track~A reveals that structural governance significantly enhances exploration fidelity in real interactive environments \cite{cote2018textworld}, even when underlying language-model outputs are strictly controlled.

Taken together, these findings suggest a necessary conceptual shift in the development of intelligent systems: from monolithic linear scaling toward \textbf{structural governance}. While scaling laws successfully characterize aggregate performance trends \cite{kaplan2020scaling}, they remain largely silent on the dynamical stability properties of extended reasoning trajectories. Our results indicate that sustaining long-horizon reasoning may depend less on increasing model capacity alone and more on organizing inference through bounded segments ($\ell_i < L^*$) interconnected by stabilization mechanisms.

\begin{figure}[t]
\centering
\begin{tikzpicture}
\begin{axis}[
    width=0.9\linewidth,
    height=6cm,
    xlabel={Branching factor $b$},
    ylabel={Effective segment length $\ell$},
    xmin=1, xmax=10,
    ymin=0, ymax=60,
    grid=major,
    legend pos=north west,
]

\addplot[
    domain=1:10,
    samples=100,
    thick,
] {20 + 8*ln(x)};  

\addlegendentry{Phase boundary}

\addplot[
    name path=A,
    domain=1:10,
] {0};

\addplot[
    name path=B,
    domain=1:10,
] {20 + 8*ln(x)};

\addplot[
    fill opacity=0.1
] fill between[of=A and B];

\node at (axis cs:3,10) {Stable regime};

\node at (axis cs:6,45) {Instability regime};

\node at (axis cs:8,25) {Branching-compensated};

\end{axis}
\end{tikzpicture}

\caption{
Conceptual phase diagram of long-horizon autoregressive reasoning.
The phase boundary illustrates the trade-off between branching factor $b$
and effective segment length $\ell$. While increased branching delays
failure by improving trajectory survival probability, intrinsic decay
imposes a logarithmic limit. Structural segmentation constrains $\ell$
below the instability boundary without incurring exponential search cost.
}
\label{fig:phase_diagram}
\end{figure}
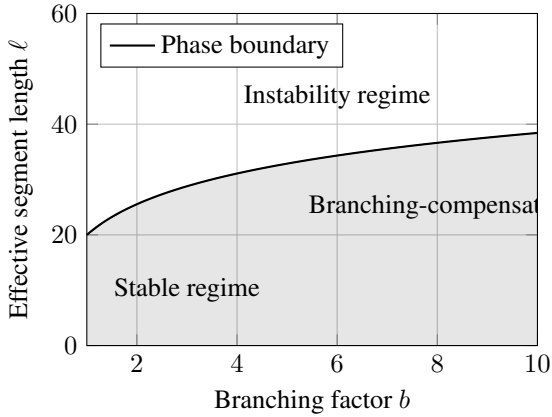

Several promising directions remain for future investigation:

\begin{enumerate}
\item \textbf{Autonomous Node Optimization.} 
Developing adaptive mechanisms for node placement, potentially guided by entropy-based diagnostics such as $\gamma_t$ or $\Delta H_t$, to enable endogenous stabilization without external supervision.

\item \textbf{Framework Generalization.}
Extending the stability framework to multi-agent collaboration, tool-augmented reasoning, or external memory systems, examining whether structural constraints persist across broader execution paradigms.

\item \textbf{Diagnostic Operationalization.}
Empirically validating the proposed dynamical indicators as real-time monitoring tools for reasoning reliability in large-scale LLM benchmarks.
\end{enumerate}

Finally, this work invites a reconsideration of how reasoning systems are evaluated. If $L^*$ represents an intrinsic stability scale, traditional short-horizon evaluation paradigms may suffer from a form of sub-critical bias, potentially overlooking structural instabilities that only emerge beyond limited interaction depths \cite{chollet2019measure}. Models may demonstrate high local coherence while remaining structurally unstable over extended reasoning horizons. From this perspective, future evaluation frameworks may need to move beyond short-term behavioral imitation toward analyzing how information is structured, compressed, and preserved across long temporal trajectories.

In summary, the challenge of long-horizon reasoning may not primarily be a question of scaling capability, but one of structural organization. By reframing long-range reasoning as a problem of structural governance, this work provides a principled foundation for the design and analysis of next-generation reasoning architectures.

\bibliographystyle{unsrt} 
\bibliography{references} 

\appendix
\section{Appendix A: Learning-Theoretic Lower Bounds in Sparse Long-Horizon Text Environments (A Problem-Complexity Baseline)}
\label{app:lower_bounds}

\subsection{Appendix Overview}
This appendix analyzes a class of sparse long-horizon text environments that induce exponential learning difficulty due to partial observability, aliasing, and reward sparsity. The results here establish problem-complexity lower bounds of the classical reinforcement learning and POMDP type.

\paragraph{Scope clarification.} The lower bounds derived in this appendix concern learning and exploration complexity \textit{across episodes}. They do not address the stability of \textit{single-trajectory} inference execution, which is the focus of Theorem~A in the main text. The appendix serves as a contrastive baseline, highlighting that long-horizon failure may arise even when inference execution is assumed ideal.

\subsection{Problem Setup: The TextWorld Hard Sparse Family ($\text{TW-HSF}_\varepsilon$)}
\label{sec:tw_hsf_setup}

We define a worst-case family of sparse long-horizon POMDPs that isolates aliasing, sparsity, and procedural dependency.

\begin{definition}[TextWorld Hard Sparse Family, $\text{TW-HSF}_\varepsilon$]
Let $\mathcal{M}(L, \mathcal{A}, \varepsilon)$ be a family of POMDPs parameterized by horizon $L \in \mathbb{N}$, a finite action space $\mathcal{A}$ with $|\mathcal{A}| \ge 2$, and an aliasing factor $\varepsilon \in [0,1)$. Each instance $M \in \mathcal{M}$ has:
\begin{itemize}
    \item \textbf{Hidden Unique Optimal Path:} A unique action sequence $a^*_{1:L} \in \mathcal{A}^L$ that deterministically transitions through latent optimal states $s_1^* \to \dots \to s_L^*$. Any deviation from $a_t^*$ transitions to an absorbing failure state $s_{\mathsf{fail}}$.
    \item \textbf{Sparse Terminal Reward:} $r(s_t, a_t) = 1$ iff $t=L$ and $a_{1:L} = a^*_{1:L}$; otherwise $r = 0$.
    \item \textbf{$\varepsilon$-Aliased Observations:} There exists a fixed distribution $D$ over observation space $\Omega$ such that for all non-terminal optimal states $s_t^*$ with $t < L$, 
    \[ \mathrm{TV}\bigl(\Omega(\cdot \mid s_t^*), D\bigr) \le \varepsilon. \]
    \item \textbf{Absence of Structural Signals:} Observations encode no explicit time index, progress marker, admissible action constraints, or subgoal completion signals.
\end{itemize}
The optimal path $a^*_{1:L}$ is sampled uniformly from $\mathcal{A}^L$.
\end{definition}

\subsection{Indistinguishability of Histories}
Aliasing prevents reliable depth inference from observation histories. Let $h_t = (o_{1:t}, a_{1:t-1})$ denote the history at time $t$.

\begin{lemma}[History Indistinguishability]
Fix any policy $\pi$. Condition on survival (i.e., no transition to $s_{\mathsf{fail}}$) up to the corresponding time. For any $t, t' < L$:
\[ \mathrm{TV}\Bigl(\mathbb{P}_\pi(h_t \mid \mathrm{survive}(t)), \mathbb{P}_\pi(h_{t'} \mid \mathrm{survive}(t'))\Bigr) \le \varepsilon \cdot \min(t, t'). \]
\end{lemma}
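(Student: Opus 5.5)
The plan is a hybrid (telescoping) argument against a reference process in which every observation is drawn from the aliasing distribution $D$ of the definition of $\text{TW-HSF}_\varepsilon$. Fix $\pi$. Conditioned on $\mathrm{survive}(t)$, the latent trajectory is forced to be $s_1^*\to\dots\to s_t^*$, and every action so far equals the corresponding $a_k^*$. The history law $\mathbb{P}_\pi(h_t\mid \mathrm{survive}(t))$ is therefore generated by alternating two kernels. The first is an observation kernel $\Omega(\cdot\mid s_k^*)$. The second is the policy kernel $\pi(\cdot\mid h_k)$, restricted to and renormalised on the surviving action. Because the observations carry no time index or progress marker (the ``absence of structural signals'' clause), the survival-conditioned action kernel is a function of the observed history alone. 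It does not depend on which depth the latent state is at.

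\textbf{Step 1 (reference process).} Define $\mathbb{Q}_t$ as the law of $h_t$ obtained by running the same survival-conditioned policy kernel, but drawing every observation i.i.d.\ from $D$. Intermediate hybrids $\mathbb{H}^{(j)}$ use $D$ for the first $j$ observations and $\Omega(\cdot\mid s_k^*)$ afterwards. Consecutive hybrids differ in a single observation kernel. Since $\mathrm{TV}$ does not increase under post-processing by a common Markov kernel (the remaining policy and observation steps), each swap costs at most $\mathrm{TV}(\Omega(\cdot\mid s_k^*),D)\le\varepsilon$. Summing over the swaps gives $\mathrm{TV}(\mathbb{P}_\pi(h_t\mid\mathrm{survive}(t)),\mathbb{Q}_t)\le \varepsilon t$, and the same bound holds for $t'$.

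\textbf{Step 2 (common prefix and the $\min$).} Histories of different lengths live on different spaces. I will therefore read the statement as comparing the two laws on their common coordinates, i.e.\ after projecting $h_{\max(t,t')}$ onto its length-$\min(t,t')$ prefix; equivalently, one embeds both into the longer space by appending the same reference continuation. Assume w.l.o.g.\ $t\le t'$. The reference laws $\mathbb{Q}_t$ and the prefix of $\mathbb{Q}_{t'}$ coincide, because under $\mathbb{Q}$ nothing depends on depth. The coordinates beyond $t$ are generated by a common kernel applied to both prefixes, so by the data-processing inequality they contribute nothing. Only the first $t$ observation swaps matter, and only on the side that is compared against $\mathbb{Q}$. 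I intend to exploit this to charge $\varepsilon$ per step for only $\min(t,t')=t$ steps, rather than the naive $\varepsilon(t+t')$ from the triangle inequality. Concretely, I would build a single coupling. Run both processes with shared randomness so that, at each depth $k\le t$, the two observations are coupled to agree except with probability at most $\varepsilon$. This uses the maximal coupling of each $\Omega(\cdot\mid s_k^*)$ with $D$, and transitively of the two processes' observation laws at step $k$. Once the prefixes agree, the continuations are identical in law.

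\textbf{Main obstacle.} The delicate step is the conditioning on $\mathrm{survive}(t')$ versus $\mathrm{survive}(t)$. Conditioning on the longer survival event can tilt the law of the first $t$ actions, since those actions influence whether later actions can match $a^*$. I would handle this using the uniform, independent prior on $a^*_{1:L}$. Given the history, the event that the next action matches $a^*_{k+1}$ has probability $1/|\mathcal{A}|$ regardless of the history. Consequently the extra conditioning factorises and does not distort the law of the prefix. If that factorisation fails for adaptive $\pi$, the fallback is to prove the weaker bound $\varepsilon\max(t,t')$ from Step 1 alone. That weaker bound suffices for the downstream indistinguishability argument in Appendix~A.
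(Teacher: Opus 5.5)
Your skeleton matches the paper's proof: restrict to the first $\min(t,t')$ steps, compare each $\Omega(\cdot\mid s_k^*)$ with $D$, and add the per-step costs by tensorization. The paper then handles the survival conditioning with one remark, that ``conditioning preserves total variation bounds by data processing''. That principle is not valid. Conditioning on an event is not a Markov kernel, and it can inflate a TV gap by a factor of order $1/\mathbb{P}(\mathrm{survive}(t))$, which is of order $|\mathcal{A}|^{t-1}$ for a near-random policy.

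You are right that the conditioning is the real obstacle, but it already breaks Step~1, not only Step~2. Under $\mathrm{survive}(t)$, the $k$-th observation kernel is $\Omega(\cdot\mid s_k^*)$ reweighted by $\mathbb{P}_\pi(\mathrm{survive}(t)\mid o_{1:k},a_{1:k-1})$ (a Doob $h$-transform). So $\mathbb{P}_\pi(h_t\mid\mathrm{survive}(t))$ is not generated by $\Omega(\cdot\mid s_k^*)$ alternating with a renormalized policy. Consequently your hybrids do not differ by one swapped kernel followed by a common Markov kernel.

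For a fixed instance this is fatal even with $\varepsilon=0$. Take $|\mathcal{A}|=2$, $\Omega(\cdot\mid s_k^*)=D$ uniform on $\{0,1\}$, and $a_k^*=1$ for all $k$, and let $\pi$ play $1$ iff the current observation is $1$. Then $\mathrm{survive}(t)$ forces $o_{1:t-1}=(1,\dots,1)$, so $\mathrm{TV}(\mathbb{P}_\pi(h_t\mid\mathrm{survive}(t)),\mathbb{Q}_t)=1-2^{-(t-1)}$. On the common prefix, $o_t$ is uniform under $\mathrm{survive}(t)$ but equals $1$ under $\mathrm{survive}(t+1)$. That is a TV of $1/2$ against a claimed bound of $0$.

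Your factorization is what rescues the argument, but it needs an assumption the definition does not state. You must average over the uniform $a^*$ and assume the observation kernels carry no information about $a^*$ (a strong reading of the ``absence of structural signals'' clause). Then $\mathbb{P}_\pi(a_k=a_k^*\mid h_k)=1/|\mathcal{A}|$ for every history, so $\mathbb{P}_\pi(\mathrm{survive}(t)\mid h_t)$ is constant. The conditioned law of $h_t$ is then the \emph{unconditioned} law of the process driven by $\Omega(\cdot\mid s_k^*)$ and the full kernel $\pi$, not a renormalized one. Step~1 then goes through, giving $\varepsilon t$ against $\mathbb{Q}_t$. On the common prefix the two laws coincide exactly, so the TV is $0$ and no coupling is needed.

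Without that assumption the factorization fails, because $\varepsilon$-closeness to $D$ does not prevent leakage. Let $\Omega(\cdot\mid s_k^*)=(1-\varepsilon)D+\varepsilon\delta_{a_k^*}$, where $\delta_{a_k^*}$ is a point mass on an observation symbol naming $a_k^*$, and let the policy obey revealed actions. One extra survival step raises the probability that $o_k$ is revealing from $\varepsilon$ to $\varepsilon|\mathcal{A}|/(\varepsilon|\mathcal{A}|+1-\varepsilon)$. So already for $t=1$, $t'=2$ the TV is about $(|\mathcal{A}|-1)\varepsilon$ when $\varepsilon|\mathcal{A}|$ is small and close to $1$ when it is large, instead of at most $\varepsilon$.

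Two smaller points:
\begin{enumerate}
\item Gluing maximal couplings through $D$ costs $2\varepsilon$ per step, not $\varepsilon$. It is also beside the point on the common prefix, where both sides use the same $\Omega(\cdot\mid s_k^*)$.
\item Your fallback from Step~1 via the triangle inequality yields $\varepsilon(t+t')$, not $\varepsilon\max(t,t')$, and it inherits the same conditioning problem.
\end{enumerate}
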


\begin{proof}
Conditioned on survival, action prefixes are consistent with the optimal path. Differences between histories arise solely from observations. For the first $k = \min(t, t')$ steps, each observation distribution is within $\varepsilon$ total variation distance of the stationary distribution $D$. By tensorization and triangle inequality:
\[ \mathrm{TV}\bigl((o_1, \dots, o_k), (o'_1, \dots, o'_k)\bigr) \le \sum_{i=1}^k \mathrm{TV}(\Omega(\cdot \mid s_i^*), D) \le k\varepsilon. \]
Conditioning preserves total variation bounds by data processing.
\end{proof}

\begin{corollary}[Depth Unidentifiability]
If $\varepsilon \le c/L$ for a sufficiently small constant $c > 0$, then for any $t, t' < L$:
\[ \mathrm{TV}\bigl(\mathbb{P}_\pi(h_t \mid \mathrm{survive}), \mathbb{P}_\pi(h_{t'} \mid \mathrm{survive})\bigr) = O(c) = o(1). \]
In particular, no agent can reliably infer its depth $t$ from its history under survival.
\end{corollary}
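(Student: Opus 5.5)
The statement to prove is the Depth Unidentifiability corollary. The plan is to obtain it directly from the History Indistinguishability lemma by substituting the assumed scaling of $\varepsilon$.

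First, fix an arbitrary policy $\pi$ and indices $t, t' < L$. The lemma gives
\[
\mathrm{TV}\bigl(\mathbb{P}_\pi(h_t \mid \mathrm{survive}(t)), \mathbb{P}_\pi(h_{t'} \mid \mathrm{survive}(t'))\bigr) \le \varepsilon \min(t,t').
\]
Since $\min(t,t') < L$, the right-hand side is at most $\varepsilon L$. Under the hypothesis $\varepsilon \le c/L$, this is at most $c$. The bound is uniform in $\pi$, $t$, $t'$ and $L$, which gives the claimed $O(c)$. Reading the $o(1)$ as referring to the regime $c \to 0$, or as $c = c_L \to 0$ if we also let $\varepsilon L \to 0$, yields the stated $o(1)$.

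Second, to justify the interpretive claim that no agent can reliably infer its depth, I would use the standard two-point (Le Cam) argument. Take any depth estimator $\hat t(h)$, which may be randomized. Its error in distinguishing depth $t$ from depth $t'$, averaged over the two hypotheses with equal prior, is at least
\[
\tfrac12\bigl(1 - \mathrm{TV}(\cdot,\cdot)\bigr) \ge \tfrac12(1-c).
\]
This follows because for any test event $A$, $|P(A) - Q(A)| \le \mathrm{TV}$. Consequently the estimator's success probability exceeds chance by at most $c/2$, which is small for small $c$.

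The main obstacle is a subtle mismatch. Histories $h_t$ and $h_{t'}$ have different lengths, so their laws live on different spaces, and the total variation distance between them is only meaningful after embedding both in a common space. One option is to compare the observation sequences truncated to length $\min(t,t')$, which is what the lemma's proof effectively does. Another is to pad with a placeholder symbol. I would adopt the first option and state the corollary for the agent's view of the common prefix, noting that the lemma already fixes this convention. I would then remark that the factor $\min(t,t')$ is what makes the uniform choice $\varepsilon \le c/L$ suffice.
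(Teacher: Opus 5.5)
Your argument is the one the paper intends: it gives no separate proof, and the corollary is read off from the History Indistinguishability lemma via $\varepsilon\min(t,t') \le \varepsilon L \le c$ uniformly in $\pi$, $t$, $t'$; your Le Cam two-point bound is a reasonable way to make the ``in particular'' clause precise. One caution on your embedding remark: under the common-prefix convention the two-point bound only constrains depth estimators that see the first $\min(t,t')$ observations, whereas an agent holding its full history $h_t$ can read off $t$ from its length, so the interpretive clause should be understood as a statement about observation content rather than about the history as a whole.
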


\subsection{Single-Episode Success Probability}

\begin{lemma}[Near-Random Action Selection]
For any policy $\pi$ and any $t < L$:
\[ \mathbb{P}_\pi(a_t = a_t^* \mid h_t, \mathrm{survive}(t)) \le \frac{1}{|\mathcal{A}|} + O(t\varepsilon). \]
\end{lemma}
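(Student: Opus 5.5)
The plan is to compare the true posterior on $a_t^*$ given the history with the posterior obtained under an idealized, fully aliased model in which every observation is drawn from $D$. Under survival up to time $t$, the history $h_t=(o_{1:t},a_{1:t-1})$ forces $a_{1:t-1}=a^*_{1:t-1}$. So the prefix of the optimal path is revealed, but the coordinate $a_t^*$ is not. Since $a^*_{1:L}$ is uniform on $\mathcal{A}^L$, the coordinates are independent, and $a_t^*$ is uniform on $\mathcal{A}$ and independent of $a^*_{1:t-1}$. Conditioning on survival to time $t$ only constrains $a^*_{1:t-1}$, so it leaves $a_t^*$ uniform. The policy's choice $a_t=\pi(h_t)$, possibly randomized, can correlate with $a_t^*$ only through the observations $o_{1:t}$.

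The first step is the fully aliased case $\varepsilon=0$. Every observation is distributed as $D$, so the joint law of $o_{1:t}$ given the path and survival does not depend on $a_t^*$. Hence $h_t$ is independent of $a_t^*$ and $\mathbb{P}(a_t=a_t^*\mid h_t,\mathrm{survive}(t))=1/|\mathcal{A}|$ exactly.

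The second step is the perturbation. Couple the true observation sequence $o_{1:t}$ with an idealized sequence $\tilde o_{1:t}$ drawn i.i.d.\ from $D$. By the tensorization and triangle-inequality argument in the proof of the History Indistinguishability lemma, the laws of the two sequences are within total variation $t\varepsilon$, uniformly in the path. For any event $E$, in particular $\{\pi(h_t)=a_t^*\}$, the probability therefore changes by at most $t\varepsilon$ when true observations are replaced by idealized ones. This gives
\[
\mathbb{P}_\pi(a_t=a_t^*\mid \mathrm{survive}(t)) \le \frac{1}{|\mathcal{A}|}+t\varepsilon .
\]

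The main obstacle is that the statement conditions on the particular history $h_t$, not just on survival. A pointwise bound cannot follow from a total-variation bound alone: a single observation value could carry large likelihood ratio information about $a_t^*$. I would first establish the bound in the averaged form above, which is what the later single-episode success bound actually uses. To get the pointwise version I would strengthen the aliasing assumption to a density-ratio condition, $\Omega(o\mid s_t^*)/D(o)\in[1-\varepsilon,1+\varepsilon]$. Under this condition the Bayes posterior of $a_t^*$ given $h_t$ is uniform up to a multiplicative factor $(1\pm\varepsilon)^{O(t)}$, which is $1+O(t\varepsilon)$ when $t\varepsilon$ is small. It follows that $\max_a\mathbb{P}(a_t^*=a\mid h_t)\le 1/|\mathcal{A}|+O(t\varepsilon)$, and no choice of action, deterministic or randomized, can do better than this maximum. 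I would state the pointwise claim as holding under that regularity, or else in expectation over $h_t$.
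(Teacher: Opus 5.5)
The paper states this lemma without proof; the only supporting argument is the History Indistinguishability lemma placed just before it. You are right that the pointwise-in-$h_t$ claim cannot follow from a total-variation hypothesis. Your density-ratio route is sound. If $\Omega(o\mid s_s^*)/D(o)\in[1-\varepsilon,1+\varepsilon]$, the policy factors are the same for every path consistent with $a_{1:t-1}$ and cancel. Each such path then has posterior weight within a factor $\bigl(\tfrac{1+\varepsilon}{1-\varepsilon}\bigr)^{t}$ of uniform. This gives the pointwise bound, and hence every averaged form.

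The gap is your second step: the averaged bound $\mathbb{P}_\pi(a_t=a_t^*\mid\mathrm{survive}(t))\le 1/|\mathcal{A}|+t\varepsilon$ under TV alone. The coupling only controls \emph{unconditional} probabilities. It shows that the event $\{\pi(o_{1:t},a^*_{1:t-1})=a_t^*\}$, with on-path observations and the true prefix plugged in, has probability at most $1/|\mathcal{A}|+t\varepsilon$. The lemma, however, conditions on $\mathrm{survive}(t)$. In the idealized model that event has probability exactly $|\mathcal{A}|^{-(t-1)}$, and it is itself a function of the observations. It can therefore select exactly the rare informative observation sequences, and the additive error $t\varepsilon$ can be inflated by up to $1/\mathbb{P}(\mathrm{survive}(t))$. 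The paper's proof of History Indistinguishability, which you invoke, makes the same move when it says ``conditioning preserves total variation bounds by data processing''; conditioning on an event is not a Markov kernel.

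The averaged claim is in fact false. Take $D=\delta_\perp$, $\Omega(\cdot\mid s_1^*)=(1-\varepsilon)\delta_\perp+\varepsilon\,\delta_{a^*_{1:L}}$, and $\Omega(\cdot\mid s_s^*)=\delta_\perp$ for $s\ge 2$, so every TV constraint holds. Let $\pi$ follow the revealed path when it sees it and act uniformly otherwise. Then
\[
\mathbb{P}_\pi(a_t=a_t^*\mid\mathrm{survive}(t))=\frac{\varepsilon+(1-\varepsilon)|\mathcal{A}|^{-t}}{\varepsilon+(1-\varepsilon)|\mathcal{A}|^{-(t-1)}}.
\]
With $\varepsilon=c/L$ and $t=L-1$ this tends to $1$, while $1/|\mathcal{A}|+O(t\varepsilon)$ stays bounded away from $1$ for small $c$. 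For the same reason, your sentence that survival ``leaves $a_t^*$ uniform'' holds only when observations carry no information about $a^*$. In that case the bound is exactly $1/|\mathcal{A}|$ and $\varepsilon$ plays no role.

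The same example gives $\mathbb{P}_\pi(\mathrm{success})\ge\varepsilon$, so the downstream single-episode bound also fails under TV alone; the averaged version could not have rescued it. Your strengthened hypothesis is therefore necessary, not an optional refinement. The downstream bound should also use it multiplicatively. A change of measure on the whole observation sequence gives $e^{O(L\varepsilon)}|\mathcal{A}|^{-L}$ when the ratio condition holds at every step. By contrast, multiplying per-step additive bounds $1/|\mathcal{A}|+O(t\varepsilon)$ does not yield a constant $C$.
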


\begin{lemma}[Single-Episode Success Probability]
For any policy $\pi$, if $\varepsilon \le c/L$:
\[ \mathbb{P}_\pi(\mathrm{success}) \le C \cdot |\mathcal{A}|^{-L}, \]
for constants $c, C > 0$.
\end{lemma}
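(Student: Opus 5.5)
The plan is to write success as a chain of per-step survival events and multiply the conditional probabilities supplied by the Near-Random Action Selection lemma. Since any deviation from $a_t^*$ leads to the absorbing state $s_{\mathsf{fail}}$, success is exactly the event that $a_t = a_t^*$ for every $t = 1,\dots,L$. The chain rule then gives
\[
\mathbb{P}_\pi(\mathrm{success}) = \prod_{t=1}^{L} \mathbb{P}_\pi\bigl(a_t = a_t^* \mid \mathrm{survive}(t)\bigr).
\]
Each factor is obtained by averaging the history-conditional bound over $h_t$ drawn from $\mathbb{P}_\pi(\cdot \mid \mathrm{survive}(t))$. By the Near-Random Action Selection lemma, this average is at most $\frac{1}{|\mathcal{A}|} + C_0 t\varepsilon$ for an absolute constant $C_0$.

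For the terminal step $t = L$, the lemma as stated covers only $t < L$. I will handle this step either by extending the same argument or by bounding that single factor by $1/|\mathcal{A}| + O(L\varepsilon)$ directly. The justification is that, by construction, the uniformly sampled $a^*_L$ is independent of the history except through aliased observations.

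Next, factor out $|\mathcal{A}|^{-L}$. This gives
\[
\mathbb{P}_\pi(\mathrm{success}) \le |\mathcal{A}|^{-L}\prod_{t=1}^{L}\bigl(1 + C_0|\mathcal{A}|\,t\varepsilon\bigr) \le |\mathcal{A}|^{-L}\exp\Bigl(C_0|\mathcal{A}|\varepsilon \sum_{t=1}^{L} t\Bigr).
\]
The exponent is of order $|\mathcal{A}|\varepsilon L^2$.

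This is the main obstacle. Under the hypothesis $\varepsilon \le c/L$, the crude estimate $t\varepsilon \le c$ controls each factor individually, but the product then becomes $(1 + C_0 c|\mathcal{A}|)^L$. That is exponential in $L$ and does not give a constant $C$. The per-step $O(t\varepsilon)$ slack cannot simply be summed. I see two ways around this.

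The first option, which I would try first, is to sharpen the per-step bound. The idea is to exploit the fact that $a_t^*$ is uniform and independent of the whole past, including $a^*_{1:t-1}$ and observations generated from earlier states. Only the observation $o_t$ emitted at $s_t^*$ can carry information about $a_t^*$, and its law is within $\varepsilon$ of $D$ in total variation. This suggests the per-step excess should be $O(\varepsilon)$ rather than $O(t\varepsilon)$. The cumulative term $t\varepsilon$ from the History Indistinguishability lemma concerns depth identification, and depth identification is not needed for guessing a fresh uniform symbol. With the $O(\varepsilon)$ bound, the product becomes $\exp(O(|\mathcal{A}|\varepsilon L)) \le \exp(O(|\mathcal{A}|c))$, which is a constant $C$.

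The fallback is to interpret the hypothesis as $\varepsilon \le c/(|\mathcal{A}|L^2)$ so that the original exponent stays bounded. I would then note that the stated lemma's constants absorb this dependence.

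Finally, I will collect the constants. I will define $C := \exp(C_0 c\,|\mathcal{A}|)$, or the analogous constant from whichever route works, and note that it is independent of $L$. This yields $\mathbb{P}_\pi(\mathrm{success}) \le C|\mathcal{A}|^{-L}$ as claimed.
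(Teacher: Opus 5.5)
The paper states this lemma without proof, and its other lemmas only suggest the route you start with. Your diagnosis is correct: chaining $1/|\mathcal{A}|+O(t\varepsilon)$ under $\varepsilon\le c/L$ leaves a factor growing like $e^{\Theta(L)}$. (The Near-Random Action Selection lemma also cannot hold pointwise in $h_t$. If $o_t$ can reveal $a^*_t$, the posterior of $a^*_t$ becomes a point mass. So you need an averaged per-step bound, and you should prove it directly.)

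\textbf{The gap is in your first option.} You assert that $a^*_t$ is independent of $o_{1:t-1}$ and that only $o_t$ carries information about it. This is not a property of TW-HSF$_\varepsilon$. The definition only bounds each emission law in total variation. Nothing in it prevents $\Omega(\cdot\mid s^*_s)$ from depending on the whole instance $a^*_{1:L}$. (The exception would be reading the informal ``no structural signals'' clause as forbidding any dependence on $a^*$; then the lemma is trivial with $C=1$ and $\varepsilon$ plays no role.) Without such a restriction the statement is false. Put $\Omega(\cdot\mid s^*_1)=(1-\varepsilon)D+\varepsilon\,\delta_{x(a^*_{1:L})}$ for an injective encoding $x$ of paths into observations. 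This law is within $\varepsilon$ of $D$. The policy that decodes $o_1$ and follows the decoded path succeeds with probability at least $\varepsilon=c/L$, which exceeds $C|\mathcal{A}|^{-L}$ once $L$ is large. So this step cannot be proved from the definition. It must enter as an explicit hypothesis, e.g.\ that the emission at $s^*_t$ depends on $a^*$ only through $a^*_{1:t}$.

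Under that hypothesis your argument closes with $C_0=1$. Conditioned on survival and on $(o_{1:t-1},a^*_{1:t-1})$, $a^*_t$ is uniform. Write $\Omega_a$ for the emission law at $s^*_t$ when $a^*_t=a$, and use $\pi(a\mid\cdot,o)\in[0,1]$, $\mathrm{TV}(\Omega_a,D)\le\varepsilon$ and $\sum_a\pi(a\mid\cdot,o)=1$:
\[
\begin{aligned}
\mathbb{P}_\pi(a_t=a^*_t\mid\cdot)
&=\frac{1}{|\mathcal{A}|}\sum_{a\in\mathcal{A}}\mathbb{E}_{o\sim\Omega_a}\bigl[\pi(a\mid\cdot,o)\bigr]\\
&\le\frac{1}{|\mathcal{A}|}\sum_{a\in\mathcal{A}}\Bigl(\mathbb{E}_{o\sim D}\bigl[\pi(a\mid\cdot,o)\bigr]+\varepsilon\Bigr)
=\frac{1}{|\mathcal{A}|}+\varepsilon .
\end{aligned}
\]
So the product is at most $|\mathcal{A}|^{-L}(1+|\mathcal{A}|\varepsilon)^{L}\le e^{c|\mathcal{A}|}\,|\mathcal{A}|^{-L}$.

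Two smaller repairs remain. First, at $t=L$ the definition imposes no aliasing on $s^*_L$, so your appeal to aliased observations is unavailable and $o_L$ may reveal $a^*_L$ outright. Bound that factor by $1$ and take $C=|\mathcal{A}|\,e^{c|\mathcal{A}|}$. Second, drop the fallback. The condition $\varepsilon\le c/(|\mathcal{A}|L^2)$ is stronger than $\varepsilon\le c/L$ by a factor of $L$, which no constant can absorb, so it proves a different lemma.
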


\subsection{Exponential Episode Complexity}

\begin{theorem}[Exponential Episode Complexity]
To achieve success probability at least $\delta \in (0,1)$ on $\text{TW-HSF}_\varepsilon$ using $N$ independent episodes:
\[ N \ge \Omega\left(|\mathcal{A}|^L \log \frac{1}{\delta}\right), \]
for $\varepsilon \le c/L$.
\end{theorem}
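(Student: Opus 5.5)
The argument reduces episode complexity to the single-episode bound of the preceding Lemma (Single-Episode Success Probability). That lemma gives $p := \mathbb{P}_\pi(\mathrm{success}) \le C\,|\mathcal{A}|^{-L}$ for every policy $\pi$ whenever $\varepsilon \le c/L$.

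The first step is to handle adaptivity across episodes. A learner running $N$ episodes may change its policy from one episode to the next, using everything observed so far. I will argue that this does not help. Before the first success, every previous episode ended with reward $0$, and observations along surviving prefixes are $\varepsilon$-aliased to the fixed distribution $D$. So the information carried into episode $j$ consists of failure events together with near-$D$ observations.

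The cleanest way to make this rigorous is through the prior. Since $a^*_{1:L}$ is drawn uniformly from $\mathcal{A}^L$, conditioning on "no success so far" rules out at most a small set of candidate sequences. I will show that each episode rules out at most one full sequence, namely the complete sequence it played, if it reached the end. Early failures rule out prefixes, but under the near-random-action Lemma their effect on the posterior over the remaining candidates is controlled.

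This gives a conditional bound for episode $j$:
\[
\mathbb{P}(\text{success in episode } j \mid \text{no earlier success}) \le C'|\mathcal{A}|^{-L},
\]
uniformly in $j$, for $N \le |\mathcal{A}|^L/2$. The constant $C'$ absorbs the factor $|\mathcal{A}|^L/(|\mathcal{A}|^L - j)$ and the $O(t\varepsilon)=O(c)$ slack.

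The second step is multiplying these conditional bounds. This gives
\[
\mathbb{P}(\text{no success in } N \text{ episodes}) \ge (1 - C'|\mathcal{A}|^{-L})^N \ge \exp\!\bigl(-2C'N|\mathcal{A}|^{-L}\bigr),
\]
using $1-x \ge e^{-2x}$ for small $x$, which holds once $L$ is large enough that $C'|\mathcal{A}|^{-L} \le 1/2$.

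The third step reads off the sample size. Interpreting the target as driving the failure probability below $\delta$, the requirement $\exp(-2C'N|\mathcal{A}|^{-L}) \le \delta$ forces
\[
N \ge \frac{|\mathcal{A}|^L}{2C'} \ln\frac{1}{\delta}.
\]
This is the claimed $\Omega(|\mathcal{A}|^L\log(1/\delta))$.

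If the target is instead read literally as reaching success probability at least $\delta$, I would note the following. The bound $1-(1-p)^N \le Np$ already yields $N \ge \delta|\mathcal{A}|^L/C$. The $\log(1/\delta)$ form corresponds to the high-confidence reading, so I would state which convention is being used.

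I expect the main obstacle to be the adaptivity step, that is, justifying that the per-episode success probability stays $O(|\mathcal{A}|^{-L})$ after conditioning on past failures. My first attempt would be a counting/posterior argument under the uniform prior on $a^*$. It works because failed episodes reveal essentially only which prefixes are wrong. Aliasing makes depth unidentifiable (Corollary on Depth Unidentifiability), so the learner cannot exploit intermediate observations beyond that.

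If this becomes messy, the fallback is a union bound. The expected number of successes over $N$ episodes is at most $N\cdot C|\mathcal{A}|^{-L}$ even for adaptive learners, because each episode's bound holds for an arbitrary history-dependent policy. This gives the linear-in-$\delta$ version robustly.
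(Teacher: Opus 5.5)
\textbf{There is a genuine gap between your second and third steps.} Your conditional bound $\mathbb{P}(\text{success in episode } j \mid \text{no earlier success}) \le C'|\mathcal{A}|^{-L}$ only holds for $j \le |\mathcal{A}|^L/2$. That restriction is what lets $C'$ absorb $|\mathcal{A}|^L/(|\mathcal{A}|^L-j)$. Yet you solve for $N \ge \frac{|\mathcal{A}|^L}{2C'}\ln\frac{1}{\delta}$, which leaves that range as soon as $\ln(1/\delta) > C'$. Within the range, your lower bound on the failure probability never drops below $e^{-C'}$. So the argument only yields $N \ge \min\{|\mathcal{A}|^L/2,\ \frac{|\mathcal{A}|^L}{2C'}\ln\frac{1}{\delta}\}$, i.e.\ $\Omega(|\mathcal{A}|^L)$ with no growth in $\log(1/\delta)$.

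This is not a technicality. A learner that plays each element of $\mathcal{A}^L$ once, without even using feedback, succeeds with probability $1$ after $|\mathcal{A}|^L$ episodes. So $\Omega(|\mathcal{A}|^L\log(1/\delta))$ is false for general learners. The $\log(1/\delta)$ factor exists only under the reading signalled by ``$N$ independent episodes'': i.i.d.\ runs of one policy. There the proof is just your steps two and three, with $C'=C$ and no restriction on $N$. The failure probability is $(1-p)^N \ge e^{-2pN}$ with $p \le C|\mathcal{A}|^{-L}$. If $a^*$ is drawn once and shared across runs, use Jensen: $\mathbb{E}[(1-p(a^*))^N] \ge (1-\mathbb{E}\,p(a^*))^N$. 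The paper states this theorem without proof, and this geometric argument is evidently the intended one. The fix is to restrict to that reading rather than to strengthen the adaptivity argument.

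\textbf{Separately, your adaptivity step and union-bound fallback do not follow from the lemmas you invoke.} The single-episode lemma bounds success for a policy chosen independently of $a^*$. An adaptive learner's episode-$j$ policy depends on $a^*$ through earlier episodes. So the claim that ``the bound holds for an arbitrary history-dependent policy'' conflates within-episode and cross-episode history.

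What a failed episode reveals depends on $\Omega(\cdot \mid s_{\mathsf{fail}})$, which the TW-HSF$_\varepsilon$ definition leaves unconstrained. If failure is observable, each episode rules out a whole prefix, i.e.\ $|\mathcal{A}|^{L-t}$ candidates rather than one, and depth-first search finds $a^*$ within $L|\mathcal{A}|$ episodes. Even if $s_{\mathsf{fail}}$ emits $D$, the per-step $\varepsilon$-discrepancy can be amplified by replaying a prefix across episodes. A single-episode statement like the near-random-action lemma cannot control that.

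If you want an adaptive lower bound, choose the hard instance in which every state, $s_{\mathsf{fail}}$ included, emits exactly $D$; this instance satisfies the aliasing condition for every $\varepsilon \ge 0$. Then each failed episode eliminates exactly the sequence it played, and the posterior stays uniform on the survivors. You get $N \ge (1-\delta)|\mathcal{A}|^L$, again without the logarithm.
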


\subsection{Structural Augmentation via Landmarks}
We formalize the effect of learning-time structure by partitioning the horizon into subtasks separated by identifiable landmarks.

\begin{corollary}[Reduction via Landmarks]
Assume there exist $k \ge 2$ identifiable landmarks partitioning the task into subtasks of lengths $\ell_1, \dots, \ell_k$, with $\sum_i \ell_i = L$. Let $\ell_{\max} = \max_i \ell_i$. If an agent can detect landmarks and learn each subtask independently, then:
\[ N_{\mathrm{structured}} \ge \Omega\left(k \cdot |\mathcal{A}|^{\ell_{\max}} \log \frac{1}{\delta}\right). \]
In the near-uniform case ($\ell_i \approx L/k$):
\[ N_{\mathrm{structured}} \ge \Omega\left(k \cdot |\mathcal{A}|^{L/k}\right). \]
\end{corollary}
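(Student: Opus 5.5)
The plan is to reduce the structured problem to $k$ independent copies of the unstructured family $\text{TW-HSF}_\varepsilon$, and apply the Exponential Episode Complexity theorem to each copy. Two further steps then aggregate the per-subtask costs into the factor $k\cdot|\mathcal{A}|^{\ell_{\max}}$.

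\textbf{Step 1 (each subtask is a hard instance).} Condition on the landmark at the start of subtask $i$ being detected. The remaining problem up to the next landmark is then a POMDP with the same properties:
\begin{itemize}
\item the hidden segment $a^*_{\cdot}$ of length $\ell_i$ is uniform on $\mathcal{A}^{\ell_i}$, independent of the other segments, because the prior on $a^*_{1:L}$ is uniform on $\mathcal{A}^L$;
\item observations strictly inside the segment are $\varepsilon$-aliased with $\varepsilon\le c/L\le c/\ell_i$;
\item reward within the segment is sparse, since the only feedback is reaching the next landmark.
\end{itemize}
By History Indistinguishability and Near-Random Action Selection, now applied with horizon $\ell_i$, the per-attempt success probability for subtask $i$ is at most $C|\mathcal{A}|^{-\ell_i}$. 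The Exponential Episode Complexity argument (a geometric/coupon bound) then gives $N_i\ge\Omega(|\mathcal{A}|^{\ell_i}\log\frac{1}{\delta_i})$ episodes devoted to subtask $i$, where $\delta_i$ is its required failure level. Independence of the segments of $a^*$ means that information gathered on subtask $j\neq i$ is useless for subtask $i$, by data processing. Hence $N_{\mathrm{structured}}\ge\sum_i N_i$.

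\textbf{Step 2 (failure allocation).} Overall success requires every subtask to succeed. So if the total failure probability is at most $\delta$, each subtask must fail with probability at most $\delta$, which gives $\delta_i\le\delta$ and $\log\frac{1}{\delta_i}\ge\log\frac{1}{\delta}$.

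\textbf{Step 3 (from the sum to $k\cdot|\mathcal{A}|^{\ell_{\max}}$), the main obstacle.} Step 2 only yields $\sum_i|\mathcal{A}|^{\ell_i}\log\frac1\delta$. This sum is comparable to $k|\mathcal{A}|^{\ell_{\max}}$ exactly when the segment lengths are comparable. To obtain the statement as worded, I will read the $\Omega(\cdot)$ as a worst-case bound over the class of instances with $k$ landmarks and maximal segment length $\ell_{\max}$, which is how the lower bound in the Exponential Episode Complexity theorem is also quantified, via the uniform prior.

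Within this class the adversary places the landmarks so that every segment has length $\ell_{\max}$, i.e.\ $L=k\ell_{\max}$; in the near-uniform case $\ell_i\approx L/k$ this is automatic. For such instances each term in Step 1 is $\Omega(|\mathcal{A}|^{\ell_{\max}}\log\frac1\delta)$, and summing $k$ of them gives the first display. Substituting $\ell_{\max}=L/k$ gives the second display.

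I expect the delicate point to be this worst-case quantification. For a fixed, highly skewed partition (one long segment and $k-1$ tiny ones) the honest bound is only $|\mathcal{A}|^{\ell_{\max}}\log\frac1\delta$. The proof must therefore state explicitly that the factor $k$ is attained over the hardest admissible landmark placement. A secondary technicality is the possibility of missing a landmark. I will assume perfect detection, as the corollary's hypothesis states. Imperfect detection only merges segments, which increases $\ell_{\max}$ and so strengthens the bound.
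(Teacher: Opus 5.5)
The paper states this corollary without proof. The intended argument is evidently your Steps 1--2: apply the Exponential Episode Complexity theorem to each landmark-delimited segment. Those steps are sound, with one repair. The inequality $N_{\mathrm{structured}}\ge\sum_i N_i$ needs more than independence of the segments, because one episode could a priori serve several segments. It holds because failure is absorbing, so each episode contains at most one failed attempt, and each segment has at most one successful exploratory attempt before it is solved. What Steps 1--2 actually give is $N_{\mathrm{structured}}\ge\Omega\bigl(\sum_{i=1}^k|\mathcal{A}|^{\ell_i}\log\frac1\delta\bigr)$.

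You rightly see that this sum does not yield the factor $k|\mathcal{A}|^{\ell_{\max}}$ for skewed partitions, but your Step 3 fix does not prove the stated display. $L$ and the partition are hypotheses of the corollary. For given $L,k,\ell_{\max}$ the adversary generally cannot make every segment of length $\ell_{\max}$: if $L=\ell_{\max}+k-1$, the only admissible partition is $(\ell_{\max},1,\dots,1)$. Setting $L=k\ell_{\max}$ only shows that \emph{some} partition is hard, which is weaker than the claim about the given partition.

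No argument can close this gap, because the first display is false as a bound uniform in $k$. Take the partition $(\ell_{\max},1,\dots,1)$. A learner that uses the landmarks to solve segments in order can allot failure probability $\delta/2$ to the long segment and $\delta/(2k)$ to each short one. It then succeeds with probability at least $1-\delta$ using $O\bigl(|\mathcal{A}|^{\ell_{\max}}\log\frac1\delta+k|\mathcal{A}|\log\frac{k}{\delta}\bigr)$ episodes. For fixed $\delta$ and $k=\ell_{\max}\to\infty$, this is $o\bigl(k|\mathcal{A}|^{\ell_{\max}}\log\frac1\delta\bigr)$. The quantity $k|\mathcal{A}|^{\ell_{\max}}$ is really an \emph{upper} bound on $\sum_i|\mathcal{A}|^{\ell_i}$. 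The first display holds only with $k$ absorbed into the constant, or when every $\ell_i\ge\ell_{\max}-O(1)$; the correct general statement is the sum bound.

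For the second display you need no near-uniformity. Your substitution $\ell_{\max}=L/k$ is valid only when the $\ell_i$ are within an additive constant of $L/k$; a merely multiplicative $\ell_i\approx L/k$ changes $|\mathcal{A}|^{\ell_i}$ by exponential factors. Instead use AM--GM: $\sum_i|\mathcal{A}|^{\ell_i}\ge k\bigl(\prod_i|\mathcal{A}|^{\ell_i}\bigr)^{1/k}=k|\mathcal{A}|^{L/k}$ for every partition with $\sum_i\ell_i=L$. Hence $N_{\mathrm{structured}}\ge\Omega\bigl(k|\mathcal{A}|^{L/k}\log\frac1\delta\bigr)$ holds unconditionally, even keeping the logarithmic factor that the paper drops.
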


\subsection{Interpretation and Relation to the Main Text}
The results in this appendix demonstrate that end-to-end learning without structure is \textbf{exponentially intractable} in sparse long-horizon text environments, even when inference execution is assumed ideal.

These bounds, however, address learning complexity across episodes and do not explain failures that arise within a single execution trajectory. In contrast, \textbf{Theorem A} in the main text characterizes a distinct limitation: the instability of autoregressive inference execution itself, which persists even in settings where branching and search complexity vanish.

Together, these results indicate that long-horizon reasoning may fail for two orthogonal reasons:
\begin{enumerate}
    \item \textbf{Problem complexity} (as captured here in Appendix A), and
    \item \textbf{Process instability} (as characterized by Theorem A in the main text).
\end{enumerate}

\section{Appendix B: Proof Outline for Theorem A and Testable Contraction Assumptions}
\label{app:theorem_a}

This appendix provides a proof outline for Theorem~A and clarifies the minimal assumption needed to obtain exponential decay of decision advantage under single-path autoregressive execution. 
Our main message is that exponential decay follows from a \emph{process-level contraction property} of the induced decoding kernel. 
We do \emph{not} claim a universal closed-form derivation of the contraction constant from architectural primitives (e.g., softmax) in this version; instead, we state sufficient and testable conditions under which such contraction holds.

\subsection{Restatement of Theorem A (Execution Stability Horizon)}
Consider an autoregressive reasoning trajectory as a Markov process
\[
Z_{t+1} \sim K(\cdot \mid Z_t),
\]
where $Z_t$ is the latent execution state at step $t$ and $K$ is the induced transition kernel of decoding and internal updates.
Let $G \in \{0,1\}$ denote the target conclusion (correct vs.\ incorrect). We define the decision advantage at step $t$ as the Bayes-optimal testing advantage:
\[
\rho_t := 1 - 2P_e(t),
\quad
P_e(t) := \inf_{\phi}\Pr[\phi(Z_t)\neq G],
\]
where the infimum is over all measurable predictors $\phi$. Hence $\rho_t \in [0,1]$ and $\rho_t=0$ corresponds to chance-level inference.

\textbf{Theorem A (informal).}
If the induced kernel loses a fixed fraction of goal-relevant information per step (formalized below), then $\rho_t$ decays exponentially with $t$, implying a finite stability horizon $L^*$ beyond which reliable single-path execution becomes unstable.

\subsection{Objects and Notation}
Let
\[
P_t := \mathcal{L}(Z_t \mid G=1), 
\qquad 
Q_t := \mathcal{L}(Z_t \mid G=0),
\]
denote the conditional state distributions. We use total variation (TV) distance
\[
\Delta_t := \TV(P_t,Q_t)
\]
as a distinguishability measure, and mutual information $I(G;Z_t)$ as a goal-relevant information measure.

We assume balanced priors $\Pr(G=1)=\Pr(G=0)=1/2$ for simplicity; this can be relaxed with minor constant changes.

\subsection{Main Assumption: Per-step Contraction of the Induced Decoding Channel}
The proof of Theorem~A relies on a single structural property of the induced kernel.

\begin{assumption}[Kernel contraction (TV form)]
\label{ass:tv_contraction}
There exists $\eta \in (0,1)$ such that for any pair of input distributions $\mu,\nu$ over $Z_t$,
\[
\TV(\mu K, \nu K)\ \le\ \eta \,\TV(\mu,\nu),
\]
where $\mu K$ denotes the pushforward distribution after one step of the Markov kernel $K$.
\end{assumption}

\noindent
\textbf{Interpretation.}
Assumption~\ref{ass:tv_contraction} isolates \emph{intrinsic process-level instability} as a property of the induced decoding channel itself, independent of branching, search complexity, or task semantics. It says that one step of autoregressive execution makes two hypotheses about the goal strictly less distinguishable.

\paragraph{Testability.}
The constant $\eta$ is an \emph{effective contraction coefficient} that can be estimated empirically by fitting distinguishability decay across controlled tasks (see Section~\ref{sec:operational_alpha}).

\paragraph{Sufficient conditions (informal).}
Assumption~\ref{ass:tv_contraction} holds for many ergodic/noisy Markov kernels with a Doeblin/minorization condition or a Dobrushin contraction coefficient strictly less than $1$. This includes kernels that (i) apply a deterministic map followed by (ii) an irreducible noise channel that is independent of $G$ and has a uniform mixing component. Standard references on contraction coefficients and strong data processing inequalities apply (e.g., lecture notes and surveys such as \cite{polyanskiy2014lecture}).

\subsection{From Contraction to Exponential Decay}
\begin{lemma}[Iterated contraction implies exponential decay in TV]
\label{lem:tv_iter}
Under Assumption~\ref{ass:tv_contraction}, we have
\[
\Delta_t \le \eta^t \Delta_0.
\]
\end{lemma}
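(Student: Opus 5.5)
The plan is an induction on $t$ that applies Assumption~\ref{ass:tv_contraction} once per step. First I need to see why $P_{t+1}$ and $Q_{t+1}$ are one-step pushforwards of $P_t$ and $Q_t$. The model is $Z_{t+1}\sim K(\cdot\mid Z_t)$ with the same kernel $K$ under both hypotheses. I read this as: conditionally on $Z_t$, the next state $Z_{t+1}$ is independent of $G$, so the transition from $Z_t$ to $Z_{t+1}$ carries no fresh goal information. This matches the ``absence of structural grounding'' assumption, since a $G$-dependent kernel would amount to an external signal. Writing $\mathcal{L}(Z_{t+1}\mid G=g)=\int K(\cdot\mid z)\,\mathcal{L}(Z_t\in dz\mid G=g)$ then gives
\[
P_{t+1}=P_tK,\qquad Q_{t+1}=Q_tK .
\]

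With this identity in hand, the inductive step is a single application of the contraction with $\mu=P_t$ and $\nu=Q_t$:
\[
\Delta_{t+1}=\TV(P_tK,Q_tK)\le \eta\,\TV(P_t,Q_t)=\eta\,\Delta_t .
\]
The base case $t=0$ is trivial, and the inductive hypothesis $\Delta_t\le\eta^t\Delta_0$ gives $\Delta_{t+1}\le\eta^{t+1}\Delta_0$. Because the assumption holds uniformly for all input pairs and $K$ is time-homogeneous, no step-dependent constants enter. If $K$ were allowed to vary with $t$, say $K_t$ with coefficients $\eta_t\le\eta$, the same argument would give $\prod_s\eta_s\le\eta^t$.

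The only step with real content is the pushforward identity $P_{t+1}=P_tK$. I would therefore state the conditional independence $Z_{t+1}\perp G\mid Z_t$ explicitly, as the Markov structure of the joint process $(G,Z_0,Z_1,\dots)$, and verify the identity by integrating the kernel against the conditional law of $Z_t$. The rest is mechanical.

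A natural follow-up, which I expect comes next in the paper, is to link back to $\rho_t$. With balanced priors, the Bayes error satisfies $P_e(t)=\tfrac12(1-\Delta_t)$, so $\rho_t=\Delta_t$. Combined with $\Delta_0=\rho_0$, the lemma then yields $\rho_t\le\rho_0 e^{-\gamma t}$ with $\gamma=-\ln\eta$.
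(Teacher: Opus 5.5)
Your proof is correct and matches the paper's. The paper likewise takes $P_{t+1}=P_tK$ and $Q_{t+1}=Q_tK$ (asserting this ``by definition''), applies the contraction with $\mu=P_t$, $\nu=Q_t$, and iterates. Your explicit justification of that pushforward identity via $Z_{t+1}\perp G\mid Z_t$ makes precise a step the paper leaves implicit.
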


\begin{proof}
By definition, $P_{t+1}=P_t K$ and $Q_{t+1}=Q_t K$. Applying Assumption~\ref{ass:tv_contraction} with $\mu=P_t$ and $\nu=Q_t$ yields
\[
\Delta_{t+1} = \TV(P_tK,Q_tK) \le \eta\,\TV(P_t,Q_t)=\eta\,\Delta_t.
\]
Iterating gives $\Delta_t \le \eta^t\Delta_0$.
\end{proof}

\begin{lemma}[Decision advantage is controlled by distinguishability]
\label{lem:adv_tv}
With balanced priors, the Bayes-optimal testing advantage satisfies
\[
\rho_t \le \Delta_t.
\]
\end{lemma}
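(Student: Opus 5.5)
The plan is to rewrite the Bayes error $P_e(t)$ in terms of the two class-conditional laws $P_t$ and $Q_t$, and then use the classical Le Cam identity relating the optimal binary test to total variation.

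First, with balanced priors, the error of any measurable predictor $\phi$ with acceptance region $A=\{z:\phi(z)=1\}$ is
\[
\Pr[\phi(Z_t)\neq G]=\tfrac12 Q_t(A)+\tfrac12 P_t(A^c)=\tfrac12\bigl(1-(P_t(A)-Q_t(A))\bigr).
\]
Predictors that randomize can be handled the same way. Replace $\mathbf 1_A$ by a test function $\psi:\mathcal Z\to[0,1]$; the error then becomes $\tfrac12\bigl(1-\int\psi\,d(P_t-Q_t)\bigr)$.

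Second, take the infimum over $\phi$ (equivalently, the supremum over $A$ or $\psi$). This gives
\[
P_e(t)=\tfrac12\bigl(1-\sup_A (P_t(A)-Q_t(A))\bigr).
\]
By the definition of total variation, $\sup_A (P_t(A)-Q_t(A))=\TV(P_t,Q_t)=\Delta_t$. For the randomized case, the inequality $\int\psi\,d(P_t-Q_t)\le (P_t-Q_t)^+(\mathcal Z)=\Delta_t$ shows that randomization does not help.

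Third, substitute into $\rho_t=1-2P_e(t)$ to get $\rho_t=\Delta_t$. In particular $\rho_t\le\Delta_t$, which is the claim.

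The only delicate point is measure-theoretic. One must justify that the supremum over sets is attained, or approached, by the likelihood-ratio set $A^\star=\{dP_t/d\mu > dQ_t/d\mu\}$ with $\mu=P_t+Q_t$. This follows from the Hahn decomposition of the signed measure $P_t-Q_t$. For completeness I would also remark on the unbalanced case. With priors $\pi_1,\pi_0$, the same computation gives $1-2P_e=\|\pi_1P_t-\pi_0Q_t\|_1-(\pi_1-\pi_0)$ up to sign conventions, which yields $\rho_t\le\Delta_t+|\pi_1-\pi_0|$. This explains the ``minor constant changes'' mentioned earlier.

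Combining the result with Lemma~\ref{lem:tv_iter} then gives $\rho_t\le\eta^t\Delta_0$. At $t=0$ the balanced-prior identity $\Delta_0=\rho_0$ holds, so this becomes $\rho_t\le\rho_0e^{-\gamma t}$, as stated in Theorem~A.
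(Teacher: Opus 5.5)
Your proposal is correct and follows the paper's route. The paper simply invokes the equal-prior identity $P_e(t)=\tfrac12\bigl(1-\TV(P_t,Q_t)\bigr)$ to get $\rho_t=\Delta_t$, and you additionally derive that identity (acceptance regions, randomized tests, Hahn decomposition), which also justifies the paper's parenthetical remark about fixed decision rules.

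There is one slip in your unbalanced-prior aside: the correct identity is $1-2P_e=\|\pi_1P_t-\pi_0Q_t\|_1$, with no $(\pi_1-\pi_0)$ term. For example, with $P_t=Q_t$ and $\pi_1=0.7$, both sides equal $0.4$, whereas your expression gives $0$ or $0.8$ depending on the sign. Your stated bound $\rho_t\le\Delta_t+|\pi_1-\pi_0|$ still follows from the correct identity by the triangle inequality.
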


\begin{proof}
For testing $G\in\{0,1\}$ from $Z_t$ under equal priors, the Bayes error satisfies
\[
P_e(t) = \tfrac12\bigl(1-\TV(P_t,Q_t)\bigr),
\]
and hence $\rho_t = 1-2P_e(t) = \TV(P_t,Q_t)=\Delta_t$. 
(If one uses an alternative definition of $\rho_t$ (e.g., via a fixed decision rule), the inequality form $\rho_t\le \Delta_t$ still holds.)
\end{proof}

\begin{proof}[Proof sketch of Theorem~A]
Combine Lemma~\ref{lem:tv_iter} and Lemma~\ref{lem:adv_tv}:
\[
\rho_t \le \Delta_t \le \eta^t\Delta_0.
\]
Let $\rho_0 := \Delta_0$ and $\gamma := -\ln \eta>0$. Then
\[
\rho_t \le \rho_0 e^{-\gamma t}.
\]
Solving $\rho_t \le \tau$ yields the critical horizon
\[
L^* = \frac{1}{\gamma}\ln\frac{\rho_0}{\tau}.
\]
\end{proof}

\subsection{Equivalent Formulation via SDPI (Mutual Information Form)}
Some readers may prefer an information-theoretic statement that cleanly separates (i) per-step information loss from (ii) hypothesis-testing conversion.

\begin{assumption}[Per-step SDPI (MI form)]
\label{ass:sdpi}
There exists $\alpha\in(0,1)$ such that
\begin{equation}
\begin{aligned}
I(G;Z_{t+1}) &\le \alpha\, I(G;Z_t) \\
&\text{for all joint laws of } (G,Z_t)
\end{aligned}
\end{equation}
consistent with the process.
\end{assumption}

\noindent
Assumption~\ref{ass:sdpi} is a strong data processing inequality (SDPI) for the induced channel $Z_t\mapsto Z_{t+1}$, commonly satisfied by mixing/noisy channels under suitable regularity \cite{polyanskiy2014lecture}.

\begin{lemma}[SDPI implies exponential decay of MI]
\label{lem:mi_decay}
Under Assumption~\ref{ass:sdpi}, $I(G;Z_t)\le \alpha^t I(G;Z_0)$.
\end{lemma}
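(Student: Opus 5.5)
Under Assumption~\ref{ass:sdpi} the claim is that $I(G;Z_t)\le \alpha^t I(G;Z_0)$. The plan is a one-step contraction iterated by induction on $t$, mirroring the proof of Lemma~\ref{lem:tv_iter} with mutual information in place of total variation.

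\textbf{Step 1 (the chain).} Since $Z_{t+1}\sim K(\cdot\mid Z_t)$ with the same kernel $K$ regardless of $G$, we have the Markov chain $G\to Z_t\to Z_{t+1}$. This is what makes the channel $Z_t\mapsto Z_{t+1}$ a fixed channel applied to an input correlated with $G$, so Assumption~\ref{ass:sdpi} applies to the joint law of $(G,Z_t)$ actually produced by the process at step $t$. That law is ``consistent with the process'' by construction, since it is obtained by running $K$ for $t$ steps from the law of $(G,Z_0)$.

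\textbf{Step 2 (one-step bound).} Invoking Assumption~\ref{ass:sdpi} at step $t$ gives
\[
I(G;Z_{t+1})\le \alpha\, I(G;Z_t).
\]

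\textbf{Step 3 (induction).} The base case $t=0$ is the identity $I(G;Z_0)\le \alpha^0 I(G;Z_0)$. For the inductive step, assume $I(G;Z_t)\le\alpha^t I(G;Z_0)$. Since $\alpha>0$, multiplying by $\alpha$ preserves the inequality, and combining with Step~2 yields
\[
I(G;Z_{t+1})\le \alpha\,I(G;Z_t)\le \alpha^{t+1}I(G;Z_0).
\]

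\textbf{Main obstacle.} The only delicate point is the scope of the quantifier in Assumption~\ref{ass:sdpi}. If it is read as holding only for one particular joint law, it cannot be applied at every $t$. I will therefore state explicitly that the assumption covers every time-$t$ marginal generated by the process, so that it applies uniformly in $t$. Equivalently, $\alpha$ can be taken to be at least the SDPI constant $\eta_{\mathrm{KL}}(K)$ of the channel, which is uniform over all input laws.

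One could also remark that, with balanced priors, Pinsker-type inequalities convert this bound into decay of $\rho_t$. By Lemma~\ref{lem:adv_tv}, $\rho_t=\Delta_t$, and then $\rho_t^2$ is bounded by a constant multiple of $I(G;Z_t)$, giving $\rho_t=O(\alpha^{t/2})$. This linkage is not needed for the lemma itself.
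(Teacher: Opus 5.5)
Your proposal is correct and takes the same approach as the paper, whose proof is simply ``iterate Assumption~\ref{ass:sdpi}.'' Your explicit induction on $t$, together with your reading that the clause ``for all joint laws consistent with the process'' covers every time-$t$ law, is exactly what makes that iteration valid.
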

\begin{proof}
Iterate Assumption~\ref{ass:sdpi}.
\end{proof}

\begin{lemma}[MI controls testing advantage (up to constants)]
\label{lem:mi_to_adv}
There exists a universal constant $c>0$ such that
\[
\rho_t \le c\sqrt{I(G;Z_t)}.
\]
\end{lemma}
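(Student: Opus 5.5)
The plan is to pass through the total variation distance, since Lemma~\ref{lem:adv_tv} already gives $\rho_t = \Delta_t = \TV(P_t,Q_t)$ under balanced priors. It then suffices to show
\[
\TV(P_t,Q_t) \le c\sqrt{I(G;Z_t)}.
\]
That bound has two ingredients: an exact identity for mutual information with a binary uniform variable, and Pinsker's inequality.

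First I write the mutual information as an average divergence to the mixture. Let $M_t := \tfrac12(P_t+Q_t)$ be the marginal law of $Z_t$. With $\Pr(G=1)=\Pr(G=0)=1/2$,
\[
I(G;Z_t) = \tfrac12 D(P_t\,\|\,M_t) + \tfrac12 D(Q_t\,\|\,M_t),
\]
which is the Jensen--Shannon divergence between $P_t$ and $Q_t$.

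Second, I apply Pinsker's inequality $\TV(\mu,\nu)\le\sqrt{\tfrac12 D(\mu\|\nu)}$ to each term. This gives $\TV(P_t,M_t)\le\sqrt{\tfrac12 D(P_t\|M_t)}$, and the same bound for $Q_t$. Since $M_t$ is the midpoint, $\TV(P_t,M_t)=\TV(Q_t,M_t)=\tfrac12\TV(P_t,Q_t)$. Averaging the two Pinsker bounds and using concavity of the square root yields
\[
\tfrac12\Delta_t \le \tfrac12\sqrt{\tfrac12 D(P_t\|M_t)}+\tfrac12\sqrt{\tfrac12 D(Q_t\|M_t)} \le \sqrt{\tfrac12 I(G;Z_t)}.
\]
Hence $\Delta_t \le \sqrt{2\,I(G;Z_t)}$, with mutual information measured in nats. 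Combined with Lemma~\ref{lem:adv_tv}, this proves the claim with $c=\sqrt{2}$. A different logarithm base only rescales $c$.

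Third, I check the two assumptions the argument relies on. If the prior is unbalanced, the same identity holds with weights $\pi,1-\pi$ and $M_t=\pi P_t+(1-\pi)Q_t$, and $c$ then depends on $\pi$. This is consistent with the paper's remark about ``minor constant changes.'' If $\rho_t$ is instead defined through a fixed decision rule, the bound still follows because $\rho_t\le\Delta_t$.

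The only step that needs care is the midpoint identity $\TV(P_t,M_t)=\tfrac12\TV(P_t,Q_t)$ together with the direction of the concavity step. Both are routine, so there is no real obstacle. Once the lemma is in hand, combining it with Lemma~\ref{lem:mi_decay} gives $\rho_t\le c\,\alpha^{t/2}\sqrt{I(G;Z_0)}$. This is exponential decay at rate $\gamma=-\tfrac12\ln\alpha$, which recovers Theorem~A in the SDPI formulation.
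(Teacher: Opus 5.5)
Your proof is correct. It makes explicit what the paper only cites: the paper's sketch points to Le Cam/Bretagnolle--Huber type inequalities and gives neither a chain of steps nor a constant.

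The substantive step you add is the identity $I(G;Z_t)=\tfrac12 D(P_t\|M_t)+\tfrac12 D(Q_t\|M_t)$. The classical inequalities are naturally stated for the pair $(P_t,Q_t)$, for instance in terms of $D(P_t\|Q_t)$. Such a bound cannot be turned into one in $I(G;Z_t)$, because $D(P_t\|Q_t)$ can be infinite while $I(G;Z_t)\le\ln 2$. Routing through the mixture $M_t$ and the midpoint identity $\TV(P_t,M_t)=\tfrac12\Delta_t$ is what makes the lemma work. After that, either Pinsker or Bretagnolle--Huber applied to $D(P_t\|M_t)$ suffices, and Pinsker is the natural choice in the small-$I$ regime that matters here.

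Your constant $c=\sqrt2$ (in nats) is sharp to first order. For $P_t=\mathrm{Bern}(\tfrac12+\tfrac\delta2)$ and $Q_t=\mathrm{Bern}(\tfrac12-\tfrac\delta2)$ one has $\Delta_t=\delta$ and $I(G;Z_t)=\delta^2/2+O(\delta^4)$. A slightly cleaner finish averages the squared bounds $\tfrac14\Delta_t^2\le\tfrac12 D(P_t\|M_t)$ and $\tfrac14\Delta_t^2\le\tfrac12 D(Q_t\|M_t)$. This gives $\Delta_t^2\le 2I(G;Z_t)$ directly, without the concavity step.

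One side remark in your third paragraph is wrong, though it does not affect the lemma under the appendix's balanced-prior assumption. Take $\pi:=\Pr(G=1)\neq\tfrac12$ and $\rho_t=1-2P_e(t)$. Then $\rho_t=\|\pi P_t-(1-\pi)Q_t\|_1\ge|2\pi-1|$ even when $Z_t$ is independent of $G$, since one can always guess the likelier label. So $\rho_t\le c\sqrt{I(G;Z_t)}$ fails at $I=0$ for every $c$, and $\rho_t\le\Delta_t$ from Lemma~\ref{lem:adv_tv} fails as well.

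What your weighted computation does give is $\Delta_t\le\sqrt{I(G;Z_t)/(2\pi(1-\pi))}$, using $\TV(P_t,M_t)=(1-\pi)\Delta_t$ and $\TV(Q_t,M_t)=\pi\Delta_t$. The correct extension is for the excess advantage over the prior-only baseline, $\rho_t-|2\pi-1|\le 2\min(\pi,1-\pi)\Delta_t$. The paper's ``minor constant changes'' remark has the same oversight, so appealing to it does not rescue the claim.
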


\begin{proof}[Proof sketch]
This follows from standard binary hypothesis-testing inequalities (e.g., Le Cam/Bretagnolle--Huber type bounds). See \cite{lecam1986asymptotic} for classical references. Constants are not central to our claims.
\end{proof}

Combining Lemma~\ref{lem:mi_decay} and Lemma~\ref{lem:mi_to_adv} yields
\[
\rho_t \le c\sqrt{I(G;Z_0)}\,\alpha^{t/2}
= \rho_0 e^{-\gamma t},
\quad
\gamma := -\tfrac12\ln\alpha>0.
\]

\subsection{Operational Definition: Estimating $\alpha(C)$ or $\eta(C)$ from Controlled Tasks}
\label{sec:operational_alpha}
To make the contraction assumption reviewer-auditable, we propose an operational estimate.

Fix a controlled task family indexed by a complexity parameter $C$ (e.g., synthetic legal reasoning instances with controlled distractor load, depth, or aliasing). For each $t$, measure the Bayes error (or a calibrated proxy) of predicting $G$ from $Z_t$, yielding $P_e(t;C)$ and advantage
\[
\rho_t(C) = 1 - 2P_e(t;C).
\]
Fit an exponential curve over the range where decay is approximately geometric:
\[
\rho_t(C) \approx \rho_0(C)\,e^{-\gamma(C)t}.
\]
Then define the effective coefficients
\[
\eta(C) := e^{-\gamma(C)}, 
\qquad
\alpha(C) := e^{-2\gamma(C)}.
\]
This turns the abstract stability assumption into an empirically estimable quantity.

\subsection{Toy Numerical Illustration (Not a Claim about LLMs)}
To illustrate the mechanism in a minimal setting, consider the 1D autoregressive chain
\[
Z_{t+1} = a Z_t + \epsilon_t, \qquad \epsilon_t\sim\mathcal{N}(0,\sigma^2),
\]
with $|a|<1$. Two initial hypotheses about $Z_0$ become less distinguishable over time, producing geometric decay in any reasonable distinguishability proxy.

\begin{verbatim}
import numpy as np
np.random.seed(42)
T = 100
eta_est = 0.95
delta = [0.5]
for t in range(1, T):
    delta.append(eta_est * delta[-1])
# delta[t] decays geometrically; gamma = -ln(eta_est)
\end{verbatim}

This toy example is included solely to visualize geometric decay; it is not intended as a faithful mechanistic model of Transformer internals.

\subsection{Remarks on Scope}
\begin{remark}[What this appendix does and does not claim]
Theorem~A is a structural result: once the induced decoding dynamics admit a strict per-step contraction (TV) or SDPI (MI), exponential decay of decision advantage follows. Deriving an exact contraction constant from specific architectural primitives (e.g., attention softmax) is a separate, more model-specific question and is left for future work.
\end{remark}

\end{document}